\tiny\color{gray},       
\theoremstyle{plain}
\newtheorem{theorem}{Theorem}
\newtheorem{proposition}{Proposition}
\newtheorem{lemma}{Lemma}
\newtheorem{lemma*}{Lemma}
\theoremstyle{definition}
\theoremstyle{remark}
\newcommand{\A}{\mathbf{A}}
\newcommand{\B}{\mathbf{B}}
\newcommand{\x}{\mathbf{x}}
\newcommand{\X}{\mathbf{X}}
\newcommand{\y}{\mathbf{y}}
\newcommand{\g}{\mathbf{g}}
\newcommand{\G}{\mathbf{G}}
\newcommand{\J}{\mathbf{J}}
\newcommand{\GG}{\mathcal{G}}
\newcommand{\ba}{\mathbf{a}}
\icmltitlerunning{A Linear Time and Space Local Point Cloud Geometry Encoder via Vectorized Kernel Mixture}
\begin{document}

\twocolumn[
\icmltitle{A Linear Time and Space Local Point Cloud Geometry Encoder\\via Vectorized Kernel Mixture (VecKM)}



\begin{icmlauthorlist}
\icmlauthor{Dehao Yuan}{yyy}
\icmlauthor{Cornelia Fermüller}{yyy}
\icmlauthor{Tahseen Rabbani}{yyy}
\icmlauthor{Furong Huang}{yyy}
\icmlauthor{Yiannis Aloimonos}{yyy}
\end{icmlauthorlist}

\icmlaffiliation{yyy}{Department of Computer Science, University of Maryland, College Park, USA}

\icmlcorrespondingauthor{Dehao Yuan}{dhyuan@umd.edu}

\icmlkeywords{Machine Learning, ICML}

\vskip 0.3in
]



\printAffiliationsAndNotice{}  

\begin{abstract}
We propose VecKM, a local point cloud geometry encoder that is descriptive and efficient to compute. VecKM leverages a unique approach by vectorizing a kernel mixture to represent the local point cloud. 
Such representation's descriptiveness is supported by two theorems that validate its ability to reconstruct and preserve the similarity of the local shape. Unlike existing encoders downsampling the local point cloud, VecKM constructs the local geometry encoding using all neighboring points, producing a more descriptive encoding. Moreover, VecKM is efficient to compute and scalable to large point cloud inputs: VecKM reduces the memory cost from $(n^2+nKd)$ to $(nd+np)$; and reduces the major runtime cost from computing $nK$ MLPs to $n$ MLPs, where $n$ is the size of the point cloud, $K$ is the neighborhood size, $d$ is the encoding dimension, and $p$ is a marginal factor. The efficiency is due to VecKM's unique factorizable property that eliminates the need of explicitly grouping points into neighbors. In the normal estimation task, VecKM demonstrates not only 100x faster inference speed but also highest accuracy and strongest robustness. In classification and segmentation tasks, integrating VecKM as a preprocessing module achieves consistently better performance than the PointNet, PointNet++, and point transformer baselines, and runs consistently faster by up to 10 times.
\end{abstract}

\vspace{-12pt}
\section{Introduction}
\begin{figure}[t]
    \centering
    \includegraphics[width=0.49\textwidth]{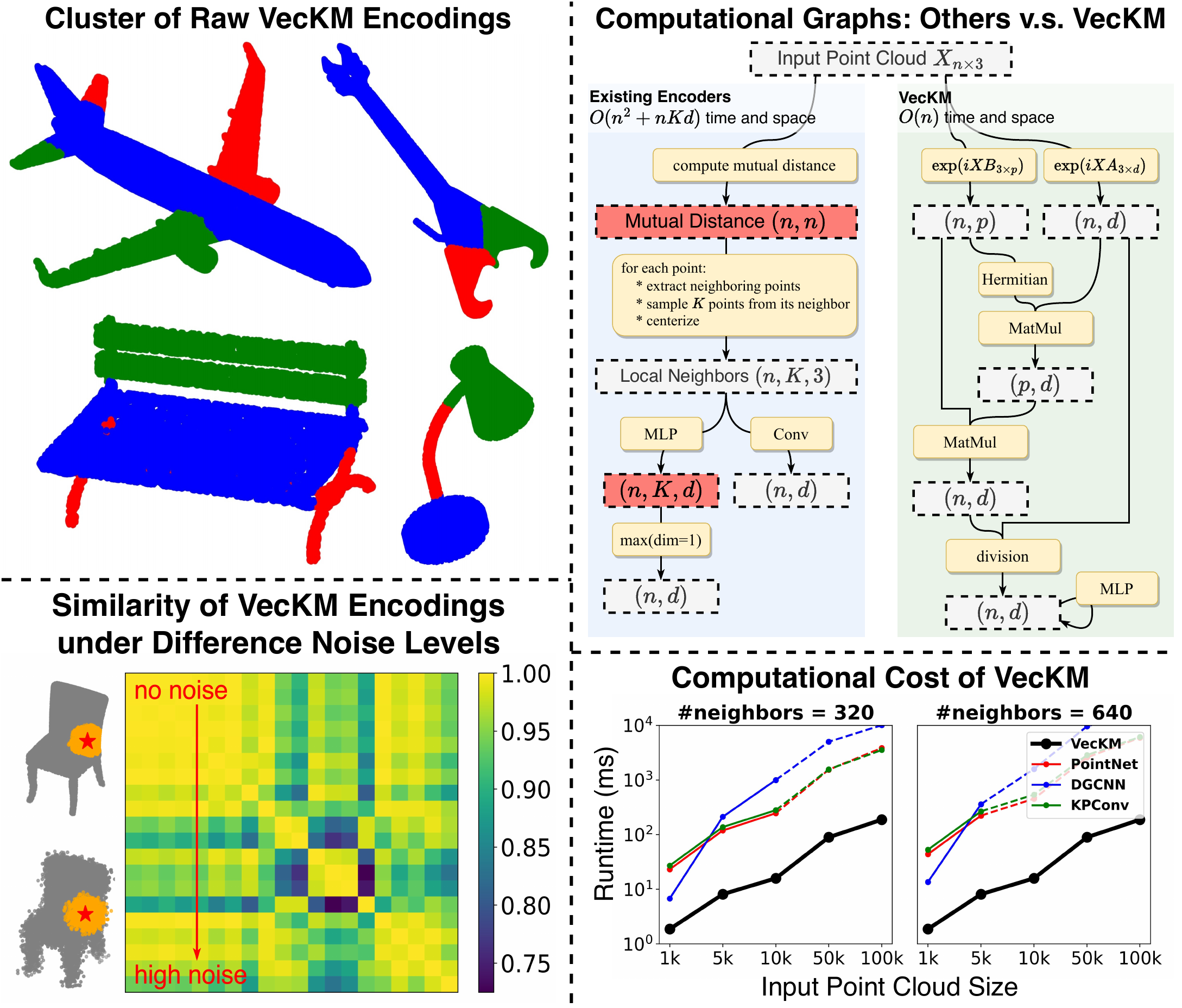}
    \vspace{-20pt}
    \caption{Our VecKM encoding is descriptive, robust to noise, and efficient in runtime and memory cost. \textbf{Upper Left}: Raw VecKM encodings, \emph{without any training}, already capture rich geometric features such as orientations and shapes. \textbf{Lower Left}: Under varying levels of noise, VecKM encodings remain highly consistent. \textbf{Upper Right}: Existing encoders face memory costs of $(n^2+nKd)$, while VecKM costs only $(nd+np)$ memory. Existing encoders compute $nK$ MLPs, whereas VecKM only computes $n$ MLPs. \textbf{Lower Right}: VecKM is 10x$\sim$100x faster than existing encoders in wall-clock time and scalable to large point cloud inputs.
    }
    \label{fig:teasor}
    \vspace{-10pt}
\end{figure}

The ubiquity and low cost of 3D sensors have drawn increased interest in the usage of three-dimensional point clouds for tasks such as autonomous driving \citep{sen2023scenecalib}, event cameras \citep{xiong2024event3dgs}, and remote sensing \citep{lu2020estimations, chen2022estimation}.

In point cloud analysis, encoding local geometry is a fundamental step. In both low-level tasks such as feature matching and normal estimation, and high-level tasks such as classification, segmentation, and detection, encoding local geometry is usually required before passing the point cloud into any deep network. Much effort has been placed into the design of local geometry encoders, which can be loosely divided into two categories: hand-crafted features  and learnable encoders. Hand-crafted features \citep{han20233d} are manually defined features based on domain expertise, and learnable encoders require computationally expensive processing through trainable structures such as multi-layer perceptrons (MLP) \citep{qi2017pointnet, ma2022rethinking} or convolutions \citep{li2018pointcnn, thomas2019kpconv}.

These local geometry encoders follow a similar pipeline. They first group the input point cloud into neighborhoods and then process each neighborhood individually. As illustrated in Figure \ref{fig:teasor} (upper right), the pipeline involves computing the mutual distance between points. Then for each point, a number of $K$ points are sampled from its neighborhood, and MLP or convolution are used to transform the sampled neighborhood. In this pipeline, grouping the point cloud into neighborhoods requires $n^2$ time and space, and the MLP-based architectures, in particular, requires a sequence of MLPs to transform $nK$ vectors and reaches an intermediate stage of $(n, K, d)$. The pipeline results in bottlenecks in both computation and memory. Consequently, they usually resort to downsampling the local point clouds (i.e. reducing $K$), which can lead to inadequate representation of the local point cloud.

In this work, we address the computation and memory bottlenecks faced by the existing encoders, reducing the memory cost from $(n^2+nKd)$ to $(nd+np)$ and only computing $n$ MLPs. Besides, our representation is constructed from all the neighboring points without downsampling, and hence is more descriptive. Our approach is inspired by \citet{frady2022computing, yuan2023decodable}, which converts continuous functions into fixed-length vectors. Building on this concept, we introduce VecKM, which conceptualizes local point clouds as kernel mixtures (a form of continuous function) and vectorizes them. 
Under this formulation, we prove the local geometry encoding is reconstructive and isometric to the local point cloud, which guarantees the descriptiveness of the representation.
One essential advantage of VecKM is its factorizable property, which eliminates the need of explicitly grouping the neighborhoods and reuses many computations.

The VecKM encodings can subsequently be passed to deep cloud point models, such as PointNet++ \citep{qi2017pointnet++} and transformers \citep{vaswani2017attention}. VecKM's light representation and ease of computation significantly speed up the inference, while still achieving on-par or improved performance than other networks in classification and segmentation tasks. Our contributions are summarized below: 
\begin{itemize}[leftmargin=7pt,topsep=0pt,itemsep=-0.5mm]
    \item We present VecKM, a local geometry encoder that is descriptive and efficient. VecKM costs only $nd+np$ memory and computes only $n$ MLPs. This is achieved through a novel approach of vectorizing kernel mixtures, coupled with its unique factorizability. VecKM is the only existing local geometry encoder that costs linear time and space.
    \item Unlike existing encoders downsampling the local point cloud, 
    VecKM constructs the local geometry encoding using all neighboring points, and hence is more descriptive.
    \item  We evaluate our VecKM on multiple point cloud tasks. In normal estimation, VecKM is $>100$x faster and achieves $>16\%$ lower error than other widely-used learnable encoders and demonstrates the strongest robustness against different types of data corruption. 
    In classification and segmentation tasks, integrating VecKM as a preprocessing module achieves consistently better performance than the PointNet, PointNet++, and point transformer baselines, and runs consistently faster by up to 10 times.
\end{itemize}

\section{Related Work}
\subsection{Local Geometry Encoder}
\label{sec:related_work_local}
The initial processing of raw point cloud data, an unordered collection of points, typically begins with the extraction of local geometric features. This step is essential before any further processing can occur. Existing methods for encoding local geometry can generally be categorized into two groups: hand-crafted features and learning-based encoders. All the encoders require grouping point clouds into neighborhoods.

\textbf{Hand-Crafted Features } are manually defined features that describe the local geometry. Domain expertise of the point cloud dataset and the task of interest are usually needed for constructing those features. We refer the reader to \citet{han20233d} for a comprehensive survey of hand-crafted features.

Histogram-based features represent a significant category within hand-crafted features, which transform the local point cloud into specific coordinate systems such as Cartesian \cite{prakhya20173dhopd}, polar \cite{ge2016non}, star-shaped \cite{steder2010narf}. Then the coordinate system is quantized, and the local point cloud is binned accordingly. The resulting feature is formed by concatenating the histograms. While these features induce minimal information loss, the resolution of the point cloud affects the quality of such features.

Statistics-based features form another category within hand-crafted features, which construct statistical descriptors from geometric parameters. Examples include eigenvalues \cite{vandapel2004natural}, covariance \cite{fehr2012compact}, normal orientation distribution \cite{triebel2006robust}, angles between normals \cite{rusu2008persistent}, local umbrella shapes \cite{ran2022surface}. These types of features can include rich geometric features and easily achieve rotation invariance. But they tend to be lossy and are often not robust to noise and variations in density.

\textbf{Learning-Based Encoders} transform the raw local point clouds into fixed-length vectors through trainable networks. These encoders can be broadly categorized into MLP-based encoders and convolution-based encoders.

MLP-based encoders use MLPs to transform the local point cloud and use max-pooling to cast the point cloud into a fixed-length vector. These operations can be performed repeatedly to retrieve deep point cloud features \cite{qi2017pointnet++}. Examples of MLP-based encoders include PointNet \cite{qi2017pointnet}, CurveNet \cite{xiang2021walk}, PointMLP \cite{ma2022rethinking}. MLP-based encoders are usually faster to compute than convolution-based ones. But they require computing an intermediate step of $(n, K, d)$ to perform the max-pooling operation. So they induce high memory cost when the input size $n$ and the neighborhood size $K$ is large.

Convolution-based encoders use point or edge convolution to transform the local point cloud into a fixed-length vector. Examples include KPConv \cite{thomas2019kpconv}, PointCNN \cite{li2018pointcnn}, PointConv \cite{wu2019pointconv}, SpiderCNN \cite{xu2018spidercnn}. Convolution-based encoders are more expensive to compute, but they do not encounter the memory bottleneck faced by MLP-based encoders.

\vspace{-1pt}
VecKM is both a hand-crafted feature and a learning-based encoder. It not only captures the geometric features, but also faithfully encodes the point distribution. This duality allows VecKM to leverage the strengths of both approaches. 

\vspace{-2pt}
\subsection{Point Cloud Architectures}
\vspace{-1pt}
We describe two major families of architectures for processing point clouds: PointNet++ and transformers. VecKM, as to be shown later, is compatible with both architectures.

\vspace{-1pt}
\textbf{PointNet++} \cite{qi2017pointnet++} utilizes hierarchical neural layers to capture fine geometric details at multiple scales. Within each layer, PointNets are utilized to transform the features. Many works have been done to improve the architecture. Examples include using different learning-based local geometry encoders, as introduced in Section \ref{sec:related_work_local}, and improving the neighborhood grouping strategies \cite{xiang2021walk, yan2020pointasnl}. PointNet++ and its derivatives tend to be faster than but less accurate than transformers.

\vspace{-1pt}
\textbf{Transformers.}
Given their success on a wide variety of vision tasks, along with their tolerance to permutations, many transformer-based models have been proposed for 3D point cloud processing. Models such as PCT \citep{guo2021pct}, 3CROSSNet \cite{han20223crossnet}, and Point-BERT \cite{yu2022point} apply transformer blocks to individual points to extract global information. Other models such as Point Transformer (PT) \cite{zhao2021point}, Pointformer \citep{pan20213d}, and the Stratified Transformer \cite{lai2022stratified} process local patches to extract local feature information. However, transformer-based models can suffer from computational and memory bottlenecks \cite{han20233d} as the attention map increases in size.

\vspace{-3pt}
\section{Methodology}
\vspace{-1pt}
\subsection{Problem Definition and Main Theorems}
\vspace{-1pt}
\textbf{Problem Definition.} Let the input point cloud be $X=\{\x_k\}_{k=1}^n$. Denote the centerized neighbor of the point $\x_k$ as $\mathfrak{N}(\x_k):=\{\x_j-\x_k: ||\x_j-\x_k||<r\}$. The output is the set of dense local geometric features $G=\{\mathbf{g}_k\}_{k=1}^n$, where $\mathbf{g}_k=E\big(\mathfrak{N}(\x_k)\big)\in\mathbb{C}^d$. We look for an encoder $E$ that maps the local point cloud into a fixed-length vector, which ``captures the underlying shape" sampled by the point cloud.

\vspace{-1pt}
To better formalize the heuristic expression of ``capturing the underlying shape", we think of the local shape around the point $x_k$ as a distribution function $f_k:\mathbb{R}^3\rightarrow\mathbb{R}^+$, where $f_k(\x)$ gives the probability density that a point $\x$ is on the local shape. We then think of the centerized local point cloud $\mathfrak{N}(\x_k)$ as random samples from the distribution function $f_k$. We expect the local point cloud encoding $E\big(\mathfrak{N}(\x_k)\big)\in\mathbb{C}^d$ to represent the distribution function $f_k$. For a good representation, we consider two natural properties: 1. the distribution function can be reconstructed from the encoding; 2. the correlation of the distribution functions is preserved by the similarity of the encodings.

\textbf{Pointwise Local Geometry Encoding.} Under the problem definition, we present the formula for encoding the local geometry around a single point. Unless specified otherwise, all input points $\x_j$ are assumed to be three-dimensional.

\begin{theorem}[Pointwise Local Geometry Encoding]
    Denote the neighbors of the point $\x_0$ as $\mathfrak{N}(\x_0):=\{\x_k-\x_0\}_{k=1}^n$. The local geometry encoding of $\x_0$ is computed as
    \vspace{-5pt}
    \begin{equation}
        E_\A\big(\mathfrak{N}(\x_0)\big)=\frac{1}{n}\sum_{k=1}^n \exp\big(i (\x_k-\x_0)\A_{3\times d}\big)
        \label{eqn:LGE}
        \vspace{-18pt}
    \end{equation}
    \label{thm:LGE}
\end{theorem}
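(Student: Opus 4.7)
\medskip

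The plan is to present this less as a surprising assertion and more as a principled derivation, since the statement really identifies a specific closed-form formula for the encoder. My approach would trace the formula back to the paper's core idea of conceptualizing a local point cloud as a kernel mixture and then vectorizing that mixture via its characteristic function.

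First, I would formalize the kernel-mixture view of the neighborhood: treat the centered neighbors $\{\x_k-\x_0\}_{k=1}^n$ as atomic samples from an underlying shape distribution $f_0:\mathbb{R}^3\to\mathbb{R}^+$, and model the empirical approximation to $f_0$ as the uniform mixture $\hat f_0(\x)=\frac{1}{n}\sum_{k=1}^n \delta_{\x_k-\x_0}(\x)$ (the Dirac idealization; a Gaussian kernel would give the same formula up to a multiplicative spectral envelope). This matches the problem-definition paragraph, where the ``underlying shape'' is a distribution and the observed neighborhood is a sample from it.

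Second, following the vectorization framework of \citet{frady2022computing, yuan2023decodable}, I would represent this distribution by evaluating its characteristic function at $d$ chosen frequency vectors $\ba_1,\dots,\ba_d\in\mathbb{R}^3$, which we stack as the columns of $\A\in\mathbb{R}^{3\times d}$. A direct computation gives
\begin{equation*}
\widehat{\hat f_0}(\ba_j) \;=\; \int e^{i\,\ba_j^\top \x}\,\hat f_0(\x)\,d\x \;=\; \frac{1}{n}\sum_{k=1}^n \exp\!\bigl(i\,(\x_k-\x_0)\,\ba_j\bigr),
\end{equation*}
so collecting the $d$ samples into a single row vector yields exactly
$\frac{1}{n}\sum_k \exp(i(\x_k-\x_0)\A_{3\times d})$, which is the claimed formula. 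Thus Theorem~\ref{thm:LGE} is the concrete realization of the ``vectorize a kernel mixture'' principle for the atomic mixture model.

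Third, I would note what this derivation is \emph{not} doing: it does not yet prove that $E_\A$ is descriptive. That load is carried by the two subsequent theorems alluded to in the introduction and abstract --- one asserting reconstructibility of $\hat f_0$ from $E_\A(\mathfrak{N}(\x_0))$, the other an isometry between correlation of distribution functions and similarity of encodings. Accordingly, in the proof I would only justify the functional form here, and defer to those theorems for the two ``natural properties'' laid out in the problem definition. The main conceptual obstacle to flag is the choice of $\A$: the derivation is valid for \emph{any} frequency matrix, but whether it gives a useful encoding depends on the sampling distribution of the columns of $\A$; making this rigorous (e.g., Gaussian frequencies of an appropriate bandwidth) is precisely what the follow-up reconstruction/isometry theorems are designed to handle.
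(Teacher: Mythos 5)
Your derivation is correct, and you have rightly recognized that Theorem~\ref{thm:LGE} carries no claim beyond a formula, so the only thing to ``prove'' is where the formula comes from; the descriptive content lives in Propositions~\ref{thm:reconstruction} and~\ref{thm:similarity_preserving}, exactly as you defer it. Your route is, however, the dual of the paper's. You read $\frac{1}{n}\sum_k \exp(i(\x_k-\x_0)\A)$ as the empirical characteristic function of the Dirac mixture $\frac{1}{n}\sum_k \delta_{\x_k-\x_0}$ sampled at the $d$ frequency columns of $\A$. The paper instead works on the spatial side: its Lemma~\ref{thm:lemma1} (a Bochner/random-features argument, proved in Appendix~\ref{sec:app_lemma1}) shows that $\frac{1}{d}\langle e^{i\x\A}, e^{i\y\A}\rangle \to \GG_\alpha(\x,\y)$ when the columns of $\A$ are $\mathcal{N}(0,\alpha^2 I)$, so that by linearity the encoding is a vectorization of the \emph{Gaussian} kernel mixture $\hat f(\x)=\frac{1}{n}\sum_k \GG_\alpha(\x,\x_k)$ (Lemmas~\ref{thm:reconstruction_lemma},~\ref{thm:similarity_preservation_lemma}), which in turn approximates the shape density by one-class SVM theory. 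These are the same mathematics viewed from the two sides of the Fourier transform, but one remark of yours is slightly misplaced: the Gaussian smoothing in the paper does not enter as a ``multiplicative spectral envelope'' on the coefficients --- the coefficients are exactly the unweighted empirical characteristic function values --- but emerges in expectation from the Gaussian \emph{sampling distribution} of the frequencies when the encoding is correlated against $e^{i\x\A}$. This is also why your closing caveat (``valid for any frequency matrix, but usefulness depends on the sampling distribution'') is precisely the content of Lemma~\ref{thm:lemma1}: the Gaussian frequency law is what turns the Monte Carlo inner product into a Gaussian kernel, and hence what makes $\alpha$ the bandwidth parameter appearing in Propositions~\ref{thm:reconstruction} and~\ref{thm:similarity_preserving}. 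Your approach buys a cleaner one-line derivation of the formula itself; the paper's buys the kernel identity that the later lemmas actually consume.
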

\textit{where $i$ is the imaginary unit and $\mathbf{A}\in\mathbb{R}^{3\times d}$ is a fixed random matrix where each element follows the normal distribution $\mathcal{N}(0,\alpha^2)$.} As to be shown in Section \ref{sec:pointwise_local_geometry}, $E_\A\big(\mathfrak{N}(\x_0)\big)$ is fundamentally vectorizing a kernel mixture about $\mathfrak{N}(\x_0)$, so we name the encoding VecKM. Next, we present two propositions that claim VecKM encoding produces a good representation of the local shape:

\begin{proposition}[Reconstruction]
    WLOG, let $f$ be the distribution function characterizing the local shape of $\mathbf{0}$, $X=\{\x_k\}_{k=1}^n$ be the random samples drawn from the distribution function $f$, and $\g_n=\frac{1}{n}\sum_{k=1}^n \exp\big(i \x_k\A\big)$ be the VecKM encoding given by Eqn. \eqref{eqn:LGE}. $\A\in\mathbb{R}^{3\times d}$ is a fixed matrix whose entries are drawn from $\mathcal{N}(0, \alpha^2)$. Then at all points $\x$ where $f(\x)$ is continuous, as $n\to\infty$ and $\alpha^2\to0$,
    \vspace{-3pt}
    \begin{equation*}
        \langle\g_n, \exp(i\x\A)\rangle \rightarrow f(\x)
    \end{equation*}
    \label{thm:reconstruction}
    \vspace{-24pt}
\end{proposition}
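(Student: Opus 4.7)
The plan is to recognize the inner product as a random-feature Gaussian kernel density estimator and then invoke classical Parzen--Rosenblatt consistency. First, I would expand
\begin{equation*}
\langle\g_n,\exp(i\x\A)\rangle \;=\; \frac{1}{n}\sum_{k=1}^{n}\sum_{j=1}^{d}\exp\!\bigl(i(\x_k-\x)^{T}\ba_j\bigr),
\end{equation*}
where $\ba_j$ is the $j$th column of $\A$, so that $\ba_j\sim\mathcal{N}(0,\alpha^{2}I_{3})$ iid. This cleanly decouples the data-side randomness (the samples $\x_k$) from the feature-side randomness (the projection $\A$).

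Second, I would use the characteristic-function identity $\mathbb{E}_{\ba\sim\mathcal{N}(0,\alpha^{2}I)}[\exp(i\y^{T}\ba)]=\exp(-\tfrac12\alpha^{2}\|\y\|^{2})$ together with a Hoeffding-style concentration (the exponentials are bounded) to conclude that the inner $j$-sum is, with high probability, close to $d\cdot\exp(-\tfrac12\alpha^{2}\|\x_k-\x\|^{2})$. After absorbing the normalization convention of $\langle\cdot,\cdot\rangle$, what remains is a Gaussian kernel density estimator $\frac{1}{n}\sum_{k}K_{\alpha}(\x_k-\x)$ of the unknown density $f$ evaluated at $\x$. Finally, I would apply standard pointwise consistency of the Gaussian KDE: at every continuity point of $f$, as $n\to\infty$ with the effective bandwidth vanishing at a compatible rate (which in the paper's parameterization corresponds to the hypothesis $\alpha^{2}\to 0$), $\hat{f}(\x)\to f(\x)$ almost surely. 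A Slutsky-type step glues the random-feature approximation error to the KDE bias and variance, both of which vanish in the stated joint limit.

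The main obstacle is bookkeeping the two coupled limits together with the normalizations. The random-feature step needs $d$ large enough to smooth out the projection noise, the KDE step needs $n$ large enough relative to the effective bandwidth, and the various scaling constants must conspire so that the composed approximation collapses exactly to $f(\x)$ rather than to a constant multiple. Quantifying these rates carefully, and verifying continuity of the characteristic function of $f$ at the random frequencies $\ba_j$, is where the technical care concentrates.
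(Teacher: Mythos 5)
Your plan follows essentially the same route as the paper's: its Lemma~\ref{thm:lemma1} (a Bochner/random-features argument giving $\frac{1}{d}\langle e^{i\x\A}, e^{i\y\A}\rangle \to \GG_\alpha(\x,\y)$) is your characteristic-function-plus-concentration step, its Lemma~\ref{thm:reconstruction_lemma} is your linearity step reducing $\langle \g_n, \exp(i\x\A)\rangle$ to the kernel mixture $\frac{1}{n}\sum_k \GG_\alpha(\x,\x_k)$, and for the final density-approximation step the paper cites the one-class SVM result of Sch\"olkopf et al.\ where you cite Parzen--Rosenblatt consistency --- the same idea under a different name. One caution on that last step, which you rightly single out as the delicate one: with $\GG_\alpha(\x,\y)=\exp(-\alpha^2\|\x-\y\|^2/2)$ the effective bandwidth is $1/\alpha$, so a vanishing bandwidth corresponds to $\alpha^2\to\infty$, not to the stated hypothesis $\alpha^2\to 0$ as you assert; moreover the kernel is unnormalized ($\int \GG_\alpha(\mathbf{0},\y)\,d\y=(2\pi/\alpha^2)^{3/2}$), so the mixture converges to a constant multiple of $f(\x)$ rather than to $f(\x)$ itself. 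These are defects inherited from the statement rather than introduced by you --- the paper's own argument glosses over exactly the same point by invoking ``an appropriately chosen $\alpha^2$'' --- but as written your KDE step will not literally close without rescaling the kernel and reversing the direction of the $\alpha$ limit.
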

\textit{where $\langle \cdot, \cdot \rangle$ denotes the inner product between two complex vectors.} 
The proposition states that under a suitable selection of the parameter $\alpha^2$, the distribution function $f$ can be approximately reconstructed from the VecKM encoding $\g_n$.

\begin{proposition}[Similarity Preservation]
    Let $f_1$, $f_2$ be two distribution functions characterizing two local shapes and $X_1$, $X_2$ be the random samples from the two distribution functions. $\g_1$, $\g_2$ are the VecKM encodings given by Eqn. \eqref{eqn:LGE} with $X_1$, $X_2$ as inputs. $\A$ is a fixed matrix whose entries are drawn from $\mathcal{N}(0, \alpha^2)$. Then the function similarity is preserved by the VecKM encodings: as $n\to\infty$ and $\alpha^2\to 0$,
    \vspace{-13pt}
    \begin{align*}
        \langle \g_1, \g_2 \rangle \rightarrow \langle f_1, f_2 \rangle = \int_{\mathbb{R}^3} f(x)g(x)dx
    \end{align*}
    \vspace{-16pt}
    \label{thm:similarity_preserving}
\end{proposition}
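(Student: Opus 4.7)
The plan is to reduce Proposition~\ref{thm:similarity_preserving} to Proposition~\ref{thm:reconstruction} rather than redoing the random-features machinery from scratch. The observation that drives the reduction is that, by linearity of the inner product,
\[
\langle \g_1, \g_2 \rangle
= \Bigl\langle \g_1,\ \tfrac{1}{n_2}\!\sum_{k=1}^{n_2}\exp(i\,\x_k^{(2)}\A)\Bigr\rangle
= \tfrac{1}{n_2}\sum_{k=1}^{n_2}\bigl\langle \g_1,\ \exp(i\,\x_k^{(2)}\A)\bigr\rangle,
\]
so each summand on the right is exactly the object whose limit Proposition~\ref{thm:reconstruction} controls, with $\g_1$ playing the role of $\g_n$ and $\x_k^{(2)}$ playing the role of the test point.

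With this decomposition in hand, the proof proceeds in two steps. First I would apply Proposition~\ref{thm:reconstruction} to each summand: in the joint limit $n_1\to\infty$ and $\alpha^2\to 0$, each inner product $\langle\g_1,\exp(i\,\x_k^{(2)}\A)\rangle$ converges to $f_1(\x_k^{(2)})$ at every point where $f_1$ is continuous. Since $f_2$ is a density and the discontinuity set of $f_1$ is Lebesgue-null, the samples $\x_k^{(2)}$ lie in the continuity set with probability one, so substitution leaves the empirical average $\frac{1}{n_2}\sum_k f_1(\x_k^{(2)})$. Second, because the $\x_k^{(2)}$ are i.i.d.\ draws from $f_2$, the strong law of large numbers yields
\[
\frac{1}{n_2}\sum_{k=1}^{n_2} f_1(\x_k^{(2)}) \;\longrightarrow\; \mathbb{E}_{\x\sim f_2}[f_1(\x)] = \int_{\mathbb{R}^3} f_1(x)f_2(x)\,dx = \langle f_1, f_2\rangle,
\]
which is the desired limit.

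The main obstacle I expect is justifying the order of limits. Proposition~\ref{thm:reconstruction} as stated gives only pointwise convergence of each summand, but the outer average itself has a number of terms $n_2$ that is being sent to infinity, so a naive limit swap is not valid. To close this gap I would strengthen the reconstruction conclusion to a mode of convergence that is uniform in $\x$ on the support of $f_2$, or at least provides an $L^1(f_2)$ bound on the reconstruction error; either would let me exchange the limit with the empirical average via dominated or bounded convergence. A more symmetric alternative, which sidesteps nested limits entirely, is to expand $\langle\g_1,\g_2\rangle$ directly as the double empirical mean $\frac{1}{n_1 n_2}\sum_{j,k}\langle e^{i\x_j^{(1)}\A},e^{i\x_k^{(2)}\A}\rangle$, recognize the inner summand as a random-features estimator of the Gaussian kernel $\exp\bigl(-\tfrac{1}{2}\alpha^2\|\x_j^{(1)}-\x_k^{(2)}\|^2\bigr)$, and identify the resulting double sum as a kernel-based inner product whose limit under the stated scaling of $\alpha^2$ is $\langle f_1,f_2\rangle$; this bypasses the interchange issue but duplicates part of the random-features analysis already used for Proposition~\ref{thm:reconstruction}.
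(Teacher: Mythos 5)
Your reduction is correct in spirit but takes a genuinely different route from the paper. The paper does not apply Proposition~\ref{thm:reconstruction} pointwise at the second sample; instead it introduces the associated Gaussian kernel mixtures $\hat f_1(\x)=\frac{1}{n_1}\sum_{p}\GG_\alpha(\x,\x_p^{(1)})$ and $\hat f_2$, uses Lemma~\ref{thm:lemma1} and linearity to show $\langle\g_1,\g_2\rangle\to\frac{1}{n_1 n_2}\sum_{p,q}\GG_\alpha(\x_p^{(1)},\x_q^{(2)})=\langle\hat f_1,\hat f_2\rangle$ as $d\to\infty$ (this is precisely the ``symmetric alternative'' you sketch at the end), and then invokes the one-class SVM result of Sch\"olkopf et al.\ to pass from $\hat f_i$ to $f_i$ as $n\to\infty$, $\alpha^2\to0$. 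Your primary route --- peeling off the second sample term by term, applying the reconstruction proposition to each $\langle\g_1,\exp(i\x_k^{(2)}\A)\rangle$, and finishing with the strong law of large numbers --- buys a cleaner logical structure (Proposition~\ref{thm:similarity_preserving} becomes a corollary of Proposition~\ref{thm:reconstruction} plus the SLLN, with no separate appeal to the density-estimation literature), at the cost of the limit-interchange issue you correctly flag: pointwise convergence of each summand does not by itself control an average over $n_2\to\infty$ terms. That gap is real but closable, since the summands are uniformly bounded (each coordinate of $\g_1$ and of $\exp(i\x\A)$ has modulus at most one), so convergence $f_2$-almost everywhere plus bounded convergence suffices; note, however, that your claim that the discontinuity set of $f_1$ is Lebesgue-null is an added regularity assumption, not a consequence of $f_1$ being a density. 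Finally, be aware that the paper's own chain is not airtight either: its identity $\int\GG_\alpha(\x,\x_p)\GG_\alpha(\x,\x_q')\,d\x=\GG_\alpha(\x_p,\x_q')$ holds only up to a normalization constant and a change of kernel width, so your argument is no less rigorous than the one it replaces.
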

\textit{where $\langle \cdot, \cdot \rangle$ denotes the inner product between two complex vectors.}
The proposition states that under a suitable selection of the parameter $\alpha^2$, the correlation of functions (i.e. shapes) is approximately preserved by the VecKM encoding.

In brief, Theorem \ref{thm:LGE} presents the formula for encoding the local geometry around a single point. Proposition \ref{thm:reconstruction}, \ref{thm:similarity_preserving} assert that VecKM well represents the underlying local geometry. In Section \ref{sec:pointwise_local_geometry}, we will explain the mechanism behind Theorem \ref{thm:LGE} and prove Proposition \ref{thm:reconstruction}, \ref{thm:similarity_preserving} in detail.

\textbf{Dense Local Geometry Encoder} With Eqn. \eqref{eqn:LGE}, we can already compute the local geometry encoding for each point individually by grouping their neighborhoods. However, VecKM has a unique factorizable property that enables us to reuse computations and eliminate the intermediate step:

\begin{theorem}[Dense Local Geometry Encoding]
    Denoting the input point cloud as a matrix $\X_{n\times 3}=\left[\x_1; \x_2; \cdots; \x_n\right]$, the dense local geometry encoding $\G_{n\times d}$ is computed by
    \vspace{-0pt}
    \begin{align}
    \begin{split}
        \mathcal{A}_{n\times d}&=\exp(i\X_{n\times 3}\A_{3\times d}) \\
        \mathcal{B}_{n\times p}&=\exp(i\X_{n\times 3}\B_{3\times p}) \\
        \G_{n\times d}&=normalize\big((\mathcal{B}\times\mathcal{B}^H\times\mathcal{A}) \: ./ \: \mathcal{A}\big)
    \end{split}
    \label{eqn:DLGE}
    \end{align}
    \label{thm:DLGE}
\end{theorem}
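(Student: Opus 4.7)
The plan is to show that the $j$-th row of $\G$ defined by Eqn.~\eqref{eqn:DLGE} coincides, in the large-$p$ limit, with the pointwise VecKM encoding $E_\A(\mathfrak{N}(\x_j))$ of Theorem~\ref{thm:LGE}, with the hard neighborhood indicator replaced by a soft Gaussian weight controlled by $\B$. The proof rests on three ingredients: (i) unfold the matrix products entry-wise so that the elementwise division by $\mathcal{A}$ recenters each contribution about $\x_j$; (ii) recognize $\mathcal{B}\mathcal{B}^H$ as a random Fourier feature estimator of a Gaussian kernel matrix; and (iii) absorb the soft weight into the \emph{normalize} step and match the result to Eqn.~\eqref{eqn:LGE}.

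First I would expand the $(j,\ell)$-th entry of the pre-normalization matrix $(\mathcal{B}\mathcal{B}^H\mathcal{A})./\mathcal{A}$. Since $\mathcal{A}_{k\ell}/\mathcal{A}_{j\ell}=\exp\bigl(i(\x_k-\x_j)\A_\ell\bigr)$, where $\A_\ell$ denotes the $\ell$-th column of $\A$, the entry rewrites as
\[
\sum_{k=1}^n (\mathcal{B}\mathcal{B}^H)_{jk}\,\exp\bigl(i(\x_k-\x_j)\A_\ell\bigr).
\]
By the classical random Fourier feature argument applied to the characteristic function of $\mathcal{N}(0,\beta^2 I)$,
\[
\tfrac{1}{p}(\mathcal{B}\mathcal{B}^H)_{jk}\;\xrightarrow{p\to\infty}\;\exp\!\bigl(-\tfrac{\beta^2}{2}\|\x_j-\x_k\|^2\bigr)\;=:\;w_{jk},
\]
so, up to the factor $p$, the entry converges to $\sum_k w_{jk}\exp\bigl(i(\x_k-\x_j)\A_\ell\bigr)$. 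Choosing $\beta$ so that $1/\beta$ matches the neighborhood radius $r$, the Gaussian weight $w_{jk}$ acts as a soft indicator of $\mathfrak{N}(\x_j)$, and after the row-wise normalization (by $\sum_k w_{jk}$) the $j$-th row of $\G$ reproduces the weighted mean of $\exp(i(\x_k-\x_j)\A)$ prescribed by Eqn.~\eqref{eqn:LGE}.

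The hard part will be making the soft-versus-hard neighborhood approximation rigorous, since Theorem~\ref{thm:LGE} uses a sharp cutoff at radius $r$ while the factorized form replaces it with a smooth Gaussian bump. One clean route is to reinterpret $\mathfrak{N}(\x_j)$ through the same Gaussian weighting, in which case the identity becomes exact in the limit; alternatively, the discrepancy can be bounded by the tail mass $\sum_{\|\x_k-\x_j\|>r} w_{jk}$, which is controlled by the product $\beta r$. A secondary technical point is uniform control of the random-features approximation over all $n^2$ kernel entries, which follows from a standard Hoeffding-plus-union-bound argument yielding error $O(\sqrt{\log n / p})$; this is precisely what makes $p$ the ``marginal factor'' referenced in the abstract.
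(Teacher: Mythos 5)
Your proposal follows essentially the same route as the paper: factor $\exp(i(\x_k-\x_j)\A)$ as the elementwise quotient $\mathcal{A}_{k\cdot}./\mathcal{A}_{j\cdot}$, replace the hard adjacency matrix $\J$ by the soft Gaussian weight $\hat{\J}[j,k]=\exp(-\beta^2\|\x_j-\x_k\|^2/2)$, and invoke Lemma~\ref{thm:lemma1} (random Fourier features) to write $\hat{\J}\approx\mathcal{B}\mathcal{B}^H$ so the product can be reassociated into $\Theta(npd)$ work. Your added remarks on bounding the soft-versus-hard discrepancy and on uniform concentration over the $n^2$ kernel entries actually go beyond the paper, which simply asserts that $\J$ and $\hat{\J}$ ``behave similarly'' without quantifying the error.
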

\vspace{-12pt}

\textit{where $\A$ and $\B$ are two random fixed matrix whose entries are drawn from $\mathcal{N}(0,\alpha^2)$ and $\mathcal{N}(0,\beta^2)$. $\times$ denotes the matrix multiplication, and $./$ denotes the elementwise division.} As to be explained in Section \ref{sec:dense_local_geometry}, computing the dense local geometry encoding using Eqn. \eqref{eqn:DLGE} has almost the same effect as computing the pointwise local geometry encoding using Eqn. \eqref{eqn:LGE}. However, Eqn. \eqref{eqn:DLGE} only takes $\Theta(npd)$ time and $(np+nd)$ space to compute, where $p$, to be shown, is a marginal factor. The computation graph is visualized in Figure \ref{fig:teasor} (upper right).

\textbf{Structure of Proof.} In Section \ref{sec:pointwise_local_geometry}, we explain the mechanism behind Theorem \ref{thm:LGE} and prove our assertion that VecKM produces a good representation of the local geometry. In Section \ref{sec:dense_local_geometry}, we explain why Eqn. \eqref{eqn:DLGE} has almost the same effect as Eqn. \eqref{eqn:LGE} and the mechanism behind Theorem \ref{thm:DLGE}. In Section \ref{sec:VecKM_deep}, we introduce how to incorporate VecKM encodings into deep point cloud architectures.

\subsection{Pointwise Local Geometry Encoder}
\label{sec:pointwise_local_geometry}
In this section, we introduce why VecKM (Eqn. \ref{eqn:LGE}) produces a good representation of the local geometry. The key idea, as illustrated in Figure \ref{fig:proof_outline}, is that (i) VecKM vectorizes a Gaussian kernel mixture associated with the local point cloud, where (ii) the associated kernel mixture can approximate the local shape distribution function. Therefore, VecKM effectively represents the local shape. We will separately validate assertion (i) and (ii).

\textbf{(i) VecKM vectorizes a kernel mixture.} We first present a lemma stating that VecKM embodies a Gaussian kernel $\GG$:

\begin{lemma}[VecKM embodies a Gaussian kernel]
Let $\x, \y\in\mathbb{R}^3$, $\A\in\mathbb{R}^{3\times d}$. All elements in $\A$ are drawn from normal distribution $\mathcal{N}(0, \alpha^2)$. Then as $d\to\infty$,
\vspace{-3pt}
\begin{align*}
\begin{split}
    \frac{1}{d}\langle e^{i\x\A}, e^{i\y\A}\rangle\rightarrow \GG_\alpha(\x, \y):=\exp\big(-\frac{\alpha^2||\x-\y||^2}{2}\big)
\end{split}
\end{align*}
\label{thm:lemma1}
\end{lemma}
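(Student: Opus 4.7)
The statement is essentially the classical random Fourier features identity applied to the Gaussian kernel: I would recognize $\frac{1}{d}\langle e^{i\x\A}, e^{i\y\A}\rangle$ as an empirical average over the $d$ columns of $\A$, identify the expectation of each summand with a Gaussian characteristic function, and then conclude via the law of large numbers.

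Concretely, I would write $\A=[\ba_1,\dots,\ba_d]$ with i.i.d.\ columns $\ba_k\sim\mathcal{N}(0,\alpha^2 I_3)$. Using the Hermitian inner product convention $\langle\mathbf{p},\mathbf{q}\rangle=\sum_k\overline{p_k}\,q_k$, a direct expansion gives
\begin{equation*}
\frac{1}{d}\langle e^{i\x\A}, e^{i\y\A}\rangle
= \frac{1}{d}\sum_{k=1}^{d} e^{i(\y-\x)\ba_k}.
\end{equation*}
Each summand $Z_k := e^{i(\y-\x)\ba_k}$ is a bounded complex random variable with $|Z_k|=1$. Moreover the scalar $(\y-\x)\ba_k=\sum_{j=1}^{3}(y_j-x_j)(\ba_k)_j$ is a linear combination of independent centred Gaussians, hence itself Gaussian with mean $0$ and variance $\alpha^2\|\y-\x\|^2$. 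Evaluating the characteristic function of this Gaussian at the argument $1$ yields
\begin{equation*}
\mathbb{E}[Z_k]=\exp\!\Bigl(-\tfrac{\alpha^2\|\y-\x\|^2}{2}\Bigr)=\GG_\alpha(\x,\y).
\end{equation*}

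Since the $Z_k$ are i.i.d.\ and uniformly bounded, applying the strong law of large numbers to the real and imaginary parts separately yields $\frac{1}{d}\sum_{k=1}^{d} Z_k \to \GG_\alpha(\x,\y)$ almost surely, which is exactly the limit claimed.

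No serious obstacle arises; the argument is the Rahimi--Recht random Fourier features construction, so the work is essentially bookkeeping. The only care required is (i) fixing the Hermitian inner product convention so that the cross-term $\overline{e^{i\x\ba_k}}\,e^{i\y\ba_k}$ collapses cleanly to a single complex exponential (the overall sign in $\y-\x$ is immaterial because the limit depends only on $\|\x-\y\|^2$), and (ii) verifying that $(\y-\x)\ba_k$ picks up exactly the variance $\alpha^2\|\y-\x\|^2$, since this is what produces the correct bandwidth $\alpha^2$ in the limiting kernel. If a quantitative rate were desired, Hoeffding's inequality for bounded complex random variables would yield a sub-Gaussian $O(d^{-1/2})$ concentration bound, but the stated almost-sure limit only requires the classical SLLN.
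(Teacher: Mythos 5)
Your proposal is correct and follows essentially the same route as the paper: both reduce the inner product to an i.i.d.\ average of $e^{i\ba_k\cdot(\x-\y)}$ over the columns of $\A$, identify the expectation of each term as $\exp(-\alpha^2\|\x-\y\|^2/2)$, and conclude by the law of large numbers. The only difference is that you invoke the Gaussian characteristic function as a known fact, whereas the paper derives that same expectation explicitly via a term-by-term Taylor expansion of $\cos$ and $\sin$ using the Gaussian moment formulas.
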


\vspace{-14pt}
Lemma \ref{thm:lemma1} is a corollary from the Bochner's theorem \citep{bochner2005harmonic, rahimi2007random}. We provide a detailed proof in Appendix \ref{sec:app_lemma1}. Importantly, the Gaussian kernel $\GG$ is approximated by the inner product of finite-length vectors $e^{i\x\A}$ and $e^{i\y\A}$. This approximation is important in vectorizing the kernel mixture and ensures the reconstructive and isometric properties in Proposition \ref{thm:reconstruction}, \ref{thm:similarity_preserving}, as detailed in the subsequent two lemmas. Unless otherwise specified, all entries in $\A$ are drawn from $\mathcal{N}(0, \alpha^2)$ and $\GG$ means $\GG_\alpha$.  The proofs are borrowed from \citet{frady2022computing}.

\begin{figure}
    \centering
    \includegraphics[width=0.4\textwidth]{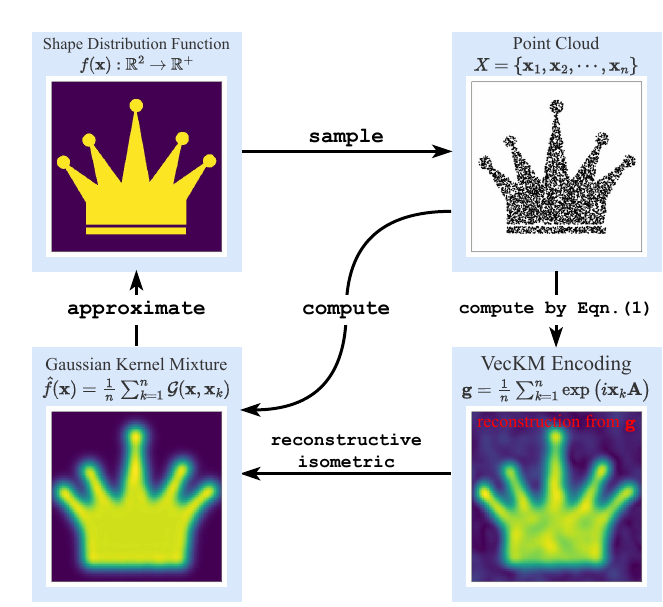}
    \vspace{-3pt}
    \caption{Theoretical outline of VecKM illustrated by 2d shapes. A point cloud, sampled from a shape distribution function, is associated with a Gaussian kernel mixture and a corresponding VecKM encoding, where the VecKM encoding is proved to be reconstructive and isometric to the Gaussian kernel mixture. Since the Gaussian kernel mixture can approximate the shape function, the VecKM encoding yields a good representation of the shape.}
    \label{fig:proof_outline}
    \vspace{-4pt}
\end{figure}

\begin{lemma}[Reconstruction]
    Let $\g=\frac{1}{n}\sum_{k=1}^n\exp(i \x_k \A)$ be the VecKM encoding, where all entries in $\A\in\mathbb{R}^{3\times d}$ are drawn from $\mathcal{N}(0, \alpha^2)$. $\hat{f}(\x)=\frac{1}{n}\sum_{k=1}^n \GG_\alpha(\x,\x_k)$ be the associated Gaussian kernel mixture. Then $\langle \exp(i \x\A),\g\rangle\rightarrow \hat{f}(\x)$ as $d\to\infty$. 
    \label{thm:reconstruction_lemma}
\end{lemma}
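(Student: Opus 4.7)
The plan is to reduce the statement to a direct termwise application of Lemma~\ref{thm:lemma1}, using only bilinearity of the complex inner product. Since Lemma~\ref{thm:lemma1} already supplies the pointwise convergence $\tfrac{1}{d}\langle e^{i\x\A}, e^{i\y\A}\rangle \to \GG_\alpha(\x,\y)$, and the sum defining $\g$ is finite in $n$, there is no heavy lifting beyond bookkeeping the normalization.

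First I would unfold $\g$ in the inner product and use conjugate-linearity in the second argument to obtain
\begin{equation*}
    \langle \exp(i\x\A),\g\rangle = \frac{1}{n}\sum_{k=1}^n \langle \exp(i\x\A),\exp(i\x_k\A)\rangle .
\end{equation*}
Assuming the inner product in the lemma statement is the normalized one $\tfrac{1}{d}\langle\cdot,\cdot\rangle$ used in Lemma~\ref{thm:lemma1} (or, equivalently, that the definition of $\g$ absorbs a $1/d$ factor), each summand converges, as $d\to\infty$ with $\x$ and $\x_k$ fixed, to $\GG_\alpha(\x,\x_k)$.

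Second, because the outer sum has a fixed finite number $n$ of terms, I can interchange the limit with the sum without any uniform convergence argument, yielding
\begin{equation*}
    \lim_{d\to\infty}\langle \exp(i\x\A),\g\rangle = \frac{1}{n}\sum_{k=1}^n \GG_\alpha(\x,\x_k) = \hat{f}(\x),
\end{equation*}
which is precisely the claim.

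The only genuine obstacle is cosmetic rather than mathematical: making the normalization convention explicit so that the $1/d$ factor in Lemma~\ref{thm:lemma1} matches the $\langle\cdot,\cdot\rangle$ in the present lemma. Once that convention is fixed at the top of the proof, everything reduces to linearity plus a single invocation of Lemma~\ref{thm:lemma1}. In particular, no probabilistic tail bounds, no exchange-of-limits analysis beyond a finite sum, and no properties of $\hat{f}$ are required — the lemma is essentially a bilinearity corollary of the random-feature approximation of the Gaussian kernel, and I would present it as such rather than redoing any of the Bochner-style machinery.
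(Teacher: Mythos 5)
Your proof is correct and is essentially identical to the paper's: both unfold $\g$ by linearity of the inner product, apply Lemma~\ref{thm:lemma1} termwise, and pass the limit through the finite sum. Your remark about making the $1/d$ normalization convention explicit is a fair observation of a detail the paper itself glosses over, but it does not change the argument.
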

The lemma is derived from the linearity of the inner product:
\begin{align*}
    \langle \exp(i\x\A), \g\rangle&=\frac{1}{n}\sum_{k=1}^n \langle \exp(i\x\A), \exp(i\x_k\A) \rangle\\
    &\rightarrow\frac{1}{n}\sum_{k=1}^n \GG(\x, \x_k)=\hat{f}(\x)
\end{align*}
The lemma states that the Gaussian kernel mixture can be approximately reconstructed from the VecKM encoding $\g$, which theoretically shows that VecKM is equivalent to the Gaussian kernel mixture when $d$ is large.

\begin{lemma}[Similarity Preservation]
    Let $\g_1$, $\g_2$ be two VecKM encodings and $f_1$, $f_2$ be their associated Gaussian kernel mixtures. Then $\langle \g_1, \g_2 \rangle\rightarrow\langle f_1, f_2 \rangle$ as $d\to\infty$.
    \label{thm:similarity_preservation_lemma}
\end{lemma}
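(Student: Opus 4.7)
The plan is to mirror the one-line proof of Lemma \ref{thm:reconstruction_lemma} almost verbatim, but with a second finite sum in place of the single feature map $\exp(i\x\A)$. Specifically, I would (i) expand both VecKM encodings by bilinearity of the Hermitian inner product on $\mathbb{C}^d$, (ii) apply Lemma \ref{thm:lemma1} termwise to each primitive pairwise inner product, and (iii) re-collect the resulting finite double sum as $\langle f_1, f_2 \rangle$.

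For step (i), writing $\g_1=\frac{1}{n_1}\sum_j e^{i\x_j^{(1)}\A}$ and $\g_2=\frac{1}{n_2}\sum_k e^{i\x_k^{(2)}\A}$, conjugate-linearity in both slots yields
\begin{equation*}
\langle \g_1,\g_2\rangle \;=\; \frac{1}{n_1 n_2}\sum_{j,k}\bigl\langle e^{i\x_j^{(1)}\A},\, e^{i\x_k^{(2)}\A}\bigr\rangle.
\end{equation*}
For step (ii), Lemma \ref{thm:lemma1} (with the same implicit normalization already used in the proof of Lemma \ref{thm:reconstruction_lemma}) gives $\langle e^{i\x_j^{(1)}\A}, e^{i\x_k^{(2)}\A}\rangle \to \GG_\alpha(\x_j^{(1)},\x_k^{(2)})$ as $d\to\infty$ for every fixed pair $(j,k)$. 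Because the outer double sum contains a fixed, $d$-independent number $n_1 n_2$ of terms, the limit commutes with the sum, so
\begin{equation*}
\langle \g_1,\g_2\rangle \;\longrightarrow\; \frac{1}{n_1 n_2}\sum_{j,k}\GG_\alpha(\x_j^{(1)},\x_k^{(2)}).
\end{equation*}

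For step (iii), I would identify the right-hand side with $\langle f_1, f_2 \rangle$ by expanding the kernel mixtures $f_i=\frac{1}{n_i}\sum_\ell \GG_\alpha(\cdot,\x_\ell^{(i)})$ under the bilinear pairing induced by the Gaussian kernel and invoking the reproducing identity $\langle \GG_\alpha(\cdot,\x), \GG_\alpha(\cdot,\y)\rangle=\GG_\alpha(\x,\y)$; the same double sum falls out by inspection.

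The main obstacle is notational rather than analytical. One must be explicit that $\langle f_1,f_2\rangle$ refers here to the Gaussian-RKHS (kernel-coefficient) pairing rather than the ordinary $L^2$ pairing $\int f_1 f_2$, because the latter interpretation would introduce a Gaussian convolution of bandwidth $\sqrt{2}\,\alpha$ and an overall normalizing constant of order $(\pi/\alpha^2)^{3/2}$ that simply do not appear on the $\langle \g_1,\g_2\rangle$ side. Once that convention is fixed, the proof collapses to a two-line computation driven entirely by bilinearity and Lemma \ref{thm:lemma1}; no new analytic machinery beyond what was used for the Reconstruction lemma is required.
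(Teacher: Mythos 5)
Your argument is correct and follows the same basic route as the paper's: bilinearity of the inner product plus termwise application of Lemma \ref{thm:lemma1}, exactly as in the Reconstruction lemma. The genuine difference is the direction of travel and the care you take with the pairing on the function side. The paper starts from $\langle f_1,f_2\rangle$ written as the $L^2$ integral, exchanges sum and integral, and then asserts $\int_{\mathbb{R}^3}\GG(\x,\x_p)\GG(\x,\x_q')\,d\x=\GG(\x_p,\x_q')$; taken literally this identity is false, since the integral of the product of two Gaussians of parameter $\alpha$ equals $(\pi/\alpha^2)^{3/2}\exp\big(-\alpha^2\|\x_p-\x_q'\|^2/4\big)$, i.e.\ a wider Gaussian ($\GG_{\alpha/\sqrt{2}}$) times a normalizing constant --- exactly the discrepancy you flag. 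Your resolution, declaring $\langle f_1,f_2\rangle$ to be the reproducing-kernel pairing so that $\langle\GG_\alpha(\cdot,\x),\GG_\alpha(\cdot,\y)\rangle=\GG_\alpha(\x,\y)$ holds by the reproducing property, makes the lemma a clean two-line computation; the alternative is to keep the $L^2$ pairing as the paper (and Proposition \ref{thm:similarity_preserving}) states it, in which case the limit of $\langle\g_1,\g_2\rangle$ is only proportional to an $L^2$-type pairing with a modified kernel width, not equal to $\int f_1 f_2$. So your proof is not merely equivalent to the paper's --- it identifies and repairs the one step in the paper's own derivation that does not hold as written, at the modest cost of having to state explicitly which inner product on functions is meant.
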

The lemma states that the VecKM encoding preserves the similarity/correlation between kernel mixtures, which further verifies that the encoding is not only equivalent but also isometric to the Gaussian kernel mixture. The lemma is derived from the linearity of integration:
\begin{align*}
    \langle f_1, f_2\rangle &= \int_{\x\in\mathbb{R}^3} \Big(\frac{1}{n}\sum_{p=1}^n \GG(\x,\x_p)\Big) \Big(\frac{1}{m}\sum_{q=1}^m \GG (\x,\x_q')\Big)d\x\\
    &=\frac{1}{mn}\sum_{p, q} \int_{\x\in\mathbb{R}^3} \GG(\x,\x_p)\GG(\x,\x_q') d\x\\
    &=\frac{1}{mn}\sum_{p, q} \GG(\x_p,\x_q') \leftarrow \langle \g_1, \g_2 \rangle
\end{align*}

\vspace{-3pt}
Lemma \ref{thm:lemma1}-\ref{thm:similarity_preservation_lemma} complete the argument that the VecKM encoding is equivalent and isometric to the kernel mixture when $d$ is large. In practice, the selection of $d$ is independent of the size of the point cloud. $d$ as small as 256 yields good encoding in many scenarios, for example, in our experiments.

\begin{wrapfigure}{r}{0.23\textwidth}
  \centering
  \vspace{-12pt}
  \includegraphics[width=0.23\textwidth]{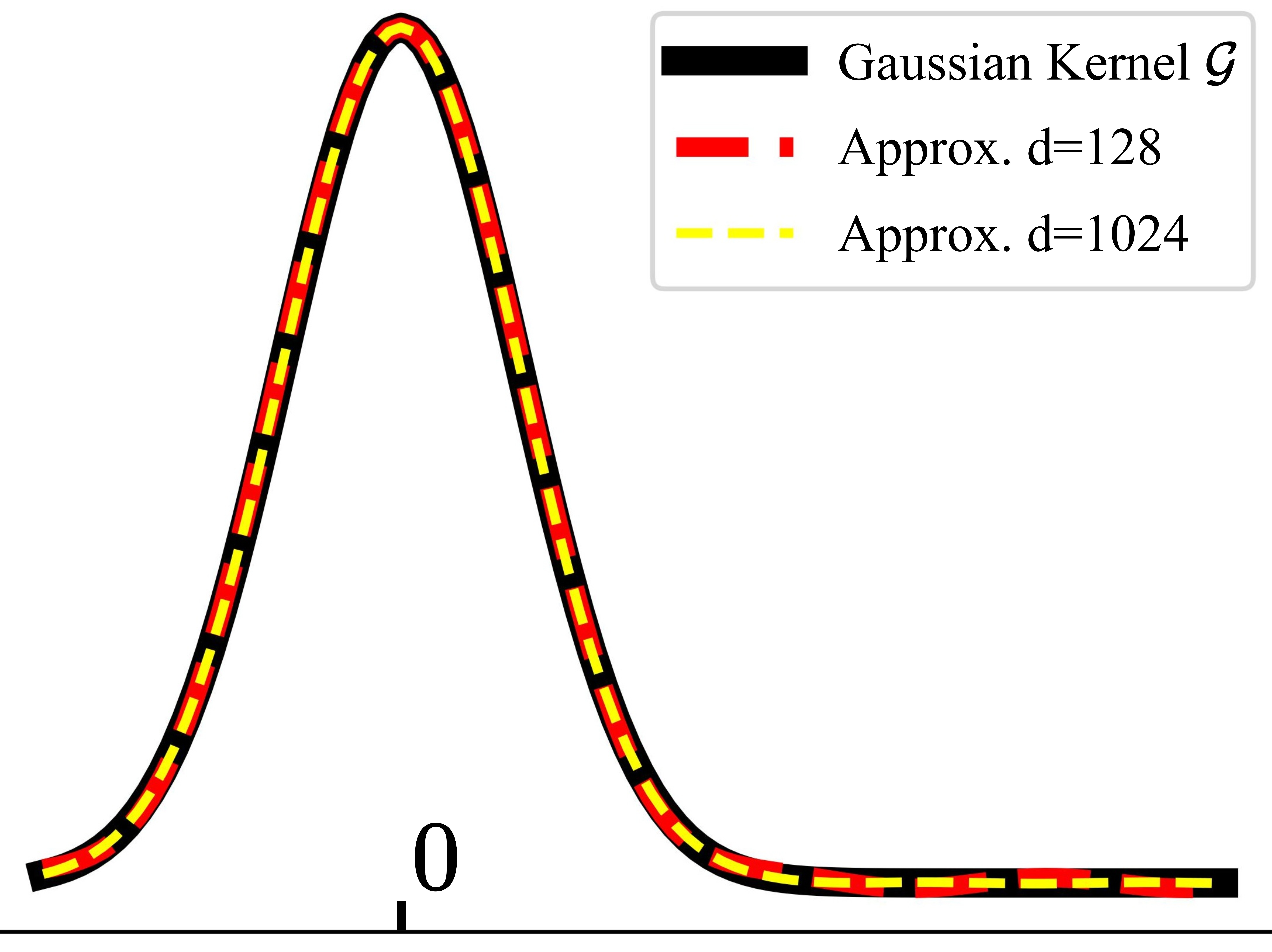}
  \vspace{-18pt}
  \caption{Visualization of Gaussian kernel $\mathcal{G}$ and its approximation with Lemma \ref{thm:lemma1}.}
  \vspace{-7pt}
  \label{fig:kernel}
\end{wrapfigure}
\textbf{(ii) The Gaussian kernel mixture associated with the point cloud approximates the shape function.} This is derived from the one-class support vector machine (SVM). The input to the one-class SVM is a collection of points and a user-defined kernel function, where the Gaussian (a.k.a. radial basis function) kernel is a common choice. The output of the one-class SVM is a kernel mixture which estimates the distribution of the input point set. \citet{scholkopf1999support} proves that with an appropriately chosen parameter $\alpha^2$ (defined in Lemma \ref{thm:lemma1}), a Gaussian kernel mixture can approximate the distribution function. This validates the assertion that the kernel mixtures associated with VecKM can approximate the shape distribution function. Coupled with Lemma \ref{thm:reconstruction_lemma}, \ref{thm:similarity_preservation_lemma}, we prove Proposition \ref{thm:reconstruction}, \ref{thm:similarity_preserving}, which reveal that VecKM effectively represents the local geometry.

\subsection{Dense Local Geometry Encoder}
\label{sec:dense_local_geometry}
In the previous section, we explained why Eqn. \ref{eqn:LGE} well represents the underlying local shape. In this section, we introduce the unique factorizable property that enables efficient computation of the dense local geometry encoding.

The geometry encoding in Eqn. \eqref{eqn:LGE} can be factorized into:
\begin{align*}
    E_\A\big(\mathfrak{N}(\x_0)\big)&=\frac{1}{n}\sum_{k=1}^n \exp\big(i (\x_k-\x_0)\A_{3\times d}\big) \\
    &=\frac{1}{n}\Big[\sum_{k=1}^n \exp(i \x_k\A)\Big] \: ./ \: \exp(i \x_0\A)
\end{align*}

Under this observation, we can write the dense local geometry encoding in terms of matrix computation:
\begin{align}
\begin{split}
    \mathcal{A}_{n\times d} &= \exp(i\X_{n\times 3} \A_{3\times d}) \\
    \G_{n\times d}&= [\mathbf{J}_{n\times n}\mathcal{A}_{n\times d}]\: ./ \: \mathcal{A}_{n\times d}
\end{split}
\label{eqn:proof_DLGE}
\end{align}
$\J_{n\times n}$ is the adjacency matrix of the point cloud $\X_{n\times 3}$, where $\J[j,k]=1$ if $||\x_j-\x_k||<r$ and 0 otherwise. Under this formulation, we still require $n^2$ time and space to compute the adjacency matrix $\J$ and $(n^2d)$ FLOPs to compute $\G$. But one important idea can be applied to speed up the computation: Instead of adopting a sharp threshold $r$ to define the adjacency relation, we employ an exponential decay function to establish this relationship:
\vspace{-1pt}
\[\hat{\J}[j,k]=\exp(-\beta^2||\x_j-\x_k||^2 / 2)\]
\vspace{-20pt}

where $\hat{\J}[j,k]$ decays from 1 to 0 as $||\x_j-\x_k||$ increases and the parameter $\beta$ controls the speed of decaying. As comparison, $\J[j,k]$ drops sharply from 1 to 0 when $||\x_j-\x_k||$ reaches $>r$. The parameter $\beta$ in $\hat{\J}$ has the same functionality as the parameter $r$ in $\J$, which is controlling the receptive field of the local neighbors. Arguably, $\J$ and $\hat{\J}$ behave similarly and it is natural to substitute $\J$ with $\hat{\J}$ in Eqn. \eqref{eqn:proof_DLGE}. The motivation of this substitution is that $\hat{\J}$ can be factorized into a matrix multiplication:
\vspace{-1pt}
\begin{align*}
    \mathcal{B}_{n\times p}&=\exp(i\X_{n\times 3}\B_{3\times p}) \\
    \hat{\J}_{n\times n} &\leftarrow \mathcal{B}\times \mathcal{B}^H \:\text{ as $p\to\infty$ }
\end{align*}
\vspace{-20pt}

where all entries in  $\B\in\mathbb{R}^{3\times p}$ follow $\mathcal{N}(0, \beta^2)$. Such approximation is, again, guaranteed by Lemma \ref{thm:lemma1}. With such approximation, Eqn. \eqref{eqn:proof_DLGE} can be rewritten as 
\begin{align*}
    \G_{n\times d}&= [\hat{\J}_{n\times n}\mathcal{A}_{n\times d}]\: ./ \: \mathcal{A}_{n\times d} \\
     &\approx [\mathcal{B}_{n\times p} \times (\mathcal{B}^H \times \mathcal{A})_{p\times d}] \: ./ \: \mathcal{A}_{n\times d}
\end{align*}
By computing $\mathcal{B}^H\times \mathcal{A}$ first, the computation cost is reduced to $\Theta(npd)$. A large point cloud size usually requires a larger $p$ to reduce the noise, but the value $p$ is much smaller than $n$. For a point cloud with size 100k, $p= 4096$ is sufficient. A large $p$ improves the quality of the encoding, but does not increase the size of the encoding, and hence does not increase the cost of subsequent processings. Such approximation-and-factorization trick is inspired from \citet{peng2021random}, which accelerates the attention computation in transformers. This concludes the proof of Theorem \ref{thm:DLGE}.

\textbf{Effect of $\alpha$ and $\beta$.} We perform a qualitative analysis of the effect of the parameters $\alpha$ and $\beta$ in Theorem \ref{thm:DLGE}. In short, $\alpha$ controls the level of details and $\beta$ controls the receptive field of the local neighbor. As illustrated in Figure \ref{fig:parameters}, when $\alpha$ is larger, more high-frequency details are preserved in the encoding, and meanwhile the local geometry encodings tends to be dissimilar to each other. A larger $\alpha$ is usually preferred in tasks that require refined local geometry, such as normal estimation. A smaller $\alpha$ is usually preferred in high-level tasks, such as classification and segmentation. For $\beta$ selection, we provide a table in Appendix \ref{sec:radius_beta}, which shows a matching between the neighborhood radius and the corresponding $\beta$ value. More quantitative analysis will be presented in Section \ref{sec:ablation}.  


\textbf{Effect of $d$ and $p$.} The parameters $d$ and $p$ control the quality of the encoding. Higher values lead to better quality of encoding. Figure \ref{fig:dp} provides the qualitative analysis.


\textbf{Uniqueness of VecKM.} VecKM cannot be established without two important properties: 1. VecKM embodies a kernel function (Lemma \ref{thm:lemma1}); 2. VecKM is factorizable. Importantly, the family of exponential functions is the only family of functions that has the factorizability property with respect to multiplication and division: $f(x-y)=f(x)/f(y)$. But if we use the real exponential functions, the computation is not numerically stable, and meanwhile, the inner product between the constructed vectors will not induce a kernel, i.e. Lemma \ref{thm:lemma1} will not hold. Therefore, VecKM is the only choice to enable both properties, i.e. both being factorizable and inducing a kernel function. \emph{Therefore, we conjecture that VecKM may be the only possible linear local geometry encoder.} Fortunately, we are blessed with the advantages offered by complex vectors, which provide the necessary descriptiveness and efficiency for VecKM.

\vspace{-1pt}
\subsection{VecKM in Point Cloud Deep Learning}
\label{sec:VecKM_deep}
VecKM can seamlessly be integrated into widely-used deep point cloud architectures, including PointNet \citep{qi2017pointnet}, PointNet++ \citep{qi2017pointnet++}, and transformers \citep{guo2021pct, zhao2021point}. Typically, these architectures compute the dense local geometry in the first layer, often utilizing mini-PointNet or sequences of KPConvs \citep{thomas2019kpconv}. To use VecKM in those architectures, we simply replace the dense local geometry modules with our VecKM encodings.

Note that VecKM produces complex vector outputs. To effectively utilize this in subsequent layers, we employ a series of complex linear layers and complex ReLU layers \citep{DBLP:journals/corr/TrabelsiBSSSMRB17} to process the encodings. Finally, we cast the complex vectors into real vectors by calculating the squared norm of the complex vectors, thereby making the output compatible with standard architecture requirements. Figure \ref{fig:deep_VecKM} presents several examples of integrating VecKM into deep point cloud architectures, which are capable of solving many tasks involving point cloud inputs. Appendix \ref{sec:app_pytorch} gives the elegant implementation of VecKM in PyTorch.

\begin{figure}[t]
    \centering
    \includegraphics[width=0.48\textwidth]{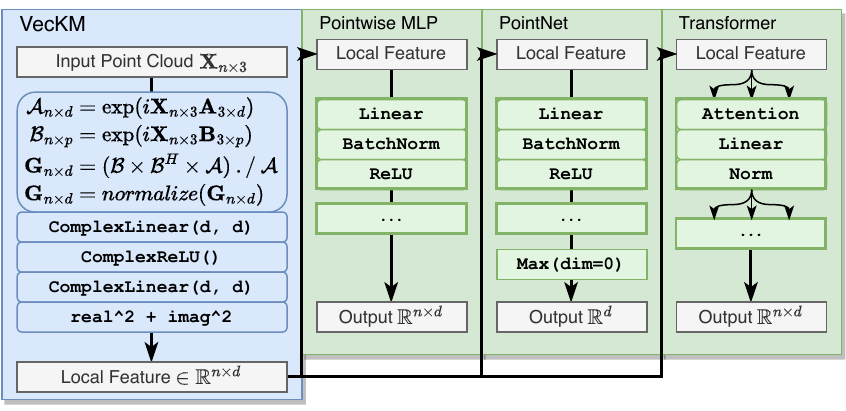}
    \vspace{-22pt}
    \caption{VecKM can be seamlessly integrated into deep point cloud architectures, improving both accuracy and efficiency.}
    \vspace{-12pt}
    \label{fig:deep_VecKM}
\end{figure}

\vspace{-2pt}
\section{Experiments}
We present extensive experiments to evaluate our VecKM encoding. In Section \ref{sec:normal_estimation}, we present quantitative and qualitative analyses on the effectiveness, efficiency, robustness, and scalability of the proposed VecKM encoding by solving the low-level task of normal estimation. In Section \ref{sec:classification}-\ref{sec:semantic_segmentation}, we demonstrate the effectiveness and efficiency when incorporating VecKM into deep point cloud architectures to solve high-level tasks. Depending on the input point cloud size, we use different implementations (either Eqn. \eqref{eqn:DLGE} or Eqn. \eqref{eqn:proof_DLGE}) of VecKM to yield the better efficiency.

\subsection{Normal Estimation on PCPNet Dataset}
\label{sec:normal_estimation}
We compare our VecKM against other local geometry encoders in four dimensions: accuracy, computational cost, memory cost, and robustness to noise. We select local point cloud normal estimation as our evaluation task because of its inherent challenges. This task requires the geometry encoders to adequately understand the local geometry. Moreover, it presents significant challenges in terms of memory and time complexity, given the large number of points in the input and the large number of neighboring points that need to be considered. As to be shown, \textbf{VecKM outperforms other encoders in all four dimensions by large margins.}

\textbf{Dataset and Metrics.} We use PCPNet \citep{guerrero2018pcpnet} as the evaluation dataset. PCPNet includes 8 shapes in the training set and 19 shapes in the test set. Each shape is sampled with 100,000 points and their ground-truth normals are derived from the original meshes. PCPNet provides two types of data corruption for testing: (1) point perturbations: adding Gaussian noise to the point coordinates. (2) point density variation: resampling the point cloud under two scenarios, where \textit{gradient} simulates the effects of varying distances from a sensor and \textit{strips} simulates the occlusion effect. We use the root mean squared angle error (RMSE) in degrees as the evaluation metrics.

\textbf{Compared Encoders.} We compare our VecKM against several widely-used local geometry encoders: PointNet \cite{qi2017pointnet}, KPConv \cite{thomas2019kpconv} and DGCNN \cite{wang2019dynamic}. 
\underline{\textbf{PointNet}}. The input point cloud is first grouped into the shape of $(n, K, 3)$ and transformed into the shape of $(n, K, d)$ by multi-layer perceptrons. Finally, a maxpooling operation shapes the data into $(n, d)$. $K$ is the number of neighboring points, which we attempt different values.
\underline{\textbf{KPConv}}. KPConv convolutes the local neighbors through a set of kernel points and transforms the convoluted features through a fully-connected layer. KPConv has a tunable parameter: the number of kernel points, which we attempt different values. 
\underline{\textbf{DGCNN}} models the neighboring points as dynamic graphs and performs edge convolution to aggregate the local feature. We adopt the architecture in the original paper, which consists of five layers of edge convolution. DGCNN has a tunable parameter: the number of neighbors being convoluted, which we attempt different values.
\underline{\textbf{VecKM}} (Ours): We adopt a multi-scale of $\alpha=60$ and $\beta=[10,20]$. Since the size of the point cloud is large, we implement VecKM by Eqn. \eqref{eqn:DLGE}. We set $d$ as 256 and $p$ as 4096.
We ensure the number of neighboring points considered by each encoder to be within 500$\sim$1000, which is sufficient to estimate the local normals. After encoding the local geometry, three layers of neural network are applied to predict the normals.

\textbf{Training Details.} Each model is trained with a batch size of 200 for a total of 200 epochs. We use the Adam optimizer, setting the learning rate at $10^{-3}$.  For data augmentation, Gaussian noise is added to the input point cloud. The input point cloud and their normals are randomly rotated. 

\begin{table}[t]
\centering
\caption{Normal estimation RMSE on the PCPNet dataset. }
\vspace{1pt}
\label{tab:normal_estimation}
\resizebox{0.48\textwidth}{!}{%
\begin{tabular}{lccccccc}
\toprule
 & \multicolumn{4}{c|}{Perturbations} & \multicolumn{2}{l|}{Density Variation} & \multirow{2}{*}{Average} \\ \cline{2-7}
 & \multicolumn{1}{l}{None} & \multicolumn{1}{l}{Low} & \multicolumn{1}{l}{Med} & \multicolumn{1}{l|}{High} & \multicolumn{1}{l}{Gradient} & \multicolumn{1}{l|}{Stripe} &  \\ \midrule
KPConv, \#kp=16 & 22.68 & 23.09 & 25.21 & 29.05 & 34.40 & 25.61 & 26.67 \\
KPConv, \#kp=32 & 22.74 & 22.21 & 24.08 & 28.25 & 32.24 & 24.94 & 25.74 \\
KPConv, \#kp=64 & 22.09 & 22.12 & 23.90 & 28.45 & 28.60 & 24.05 & 24.86 \\ \midrule
DGCNN, \#nbr=32 & 24.08 & 24.04 & 25.19 & 28.24 & 27.12 & 27.55 & 26.03 \\
DGCNN, \#nbr=64 & 23.21 & 25.34 & 25.66 & 26.01 & 28.86 & 28.20 & 26.21 \\
DGCNN, \#nbr=128 & 18.46 & 18.71 & 20.38 & 25.62 & 23.01 & 21.29 & 21.24 \\ \midrule
PointNet, \#nbr=300 & 14.98 & 16.30 & 20.19 & 26.83 & 23.68 & 19.00 & 20.17 \\
PointNet, \#nbr=500 & 16.10 & 16.54 & 21.38 & 26.93 & 26.06 & 18.89 & 20.99 \\
PointNet, \#nbr=700 & 15.59 & 16.25 & 20.99 & 26.21 & 24.66 & 17.87 & 20.27 \\
\midrule
VecKM (Ours) & \textbf{13.59} & \textbf{13.99} & \textbf{18.04} & \textbf{22.21} & \textbf{18.98} & \textbf{17.20} & \textbf{17.34} \\
\bottomrule
\end{tabular}%
}
\vspace{-10pt}
\end{table}

\begin{figure}[t]
    \centering
    \includegraphics[width=0.48\textwidth]{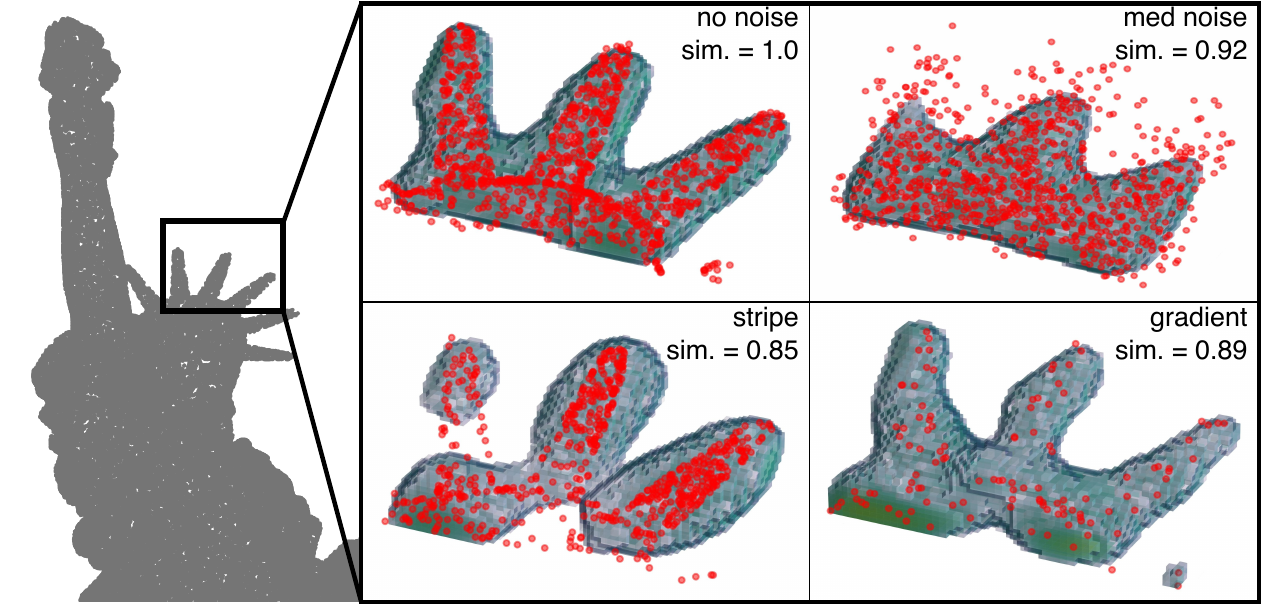}
    \vspace{-16pt}
    \caption{VecKM's robustness to data corruptions. VecKM can reconstruct the local shape under corrupted inputs. The VecKM encodings remain highly similar under data corruptions.}
    \label{fig:normal_explain}
    \vspace{-5pt}
\end{figure}

As shown in Table \ref{tab:normal_estimation}, \textbf{VecKM achieves $>16\%$ lower errors than all the compared encoders and performs the best under all data corruption settings.} This reveals that VecKM effectively captures the local geometry and is more robust to input perturbation and density variation. 
The effectiveness of VecKM can be attributed to its reconstructive and isometric properties, and its noise robustness is derived from the robustness inherent in the kernel mixture.
Figure \ref{fig:normal_explain} visualizes the explanation, which shows that even under corruptions, VecKM can still reconstruct local shapes and the VecKM encodings are consistent. 
In the case of the \textit{stripe} corruption setting, while the reconstruction may appear less accurate, the downstream neural network compensates for this discrepancy. This is evidenced by the relatively stable RMSE of the \textit{stripe} setting in Table \ref{tab:normal_estimation}, indicating that the overall impact on performance is not substantial.

As shown in Figure \ref{fig:exp_runtime}, \textbf{VecKM is $>$ 100x faster than all the compared encoders and is scalable to large point cloud inputs.} Even when the input size is as large as 100k, VecKM only takes $150$ ms to run. For memory cost, PointNet and DGCNN easily incur memory outrage when the neighbor size $K$ is large because they require an intermediate step of $(n, K, d)$ to compute the encoding. KPConv can be memory efficient through careful parallel programming, but existing implementations are not scalable to the settings we experiment with. VecKM, however, thanks to its unique factorizable property, only costs less than 8GB memory even with pure PyTorch implementation.

\vspace{-7pt}
\begin{figure}[H]
    \centering
    \includegraphics[width=0.5\textwidth]{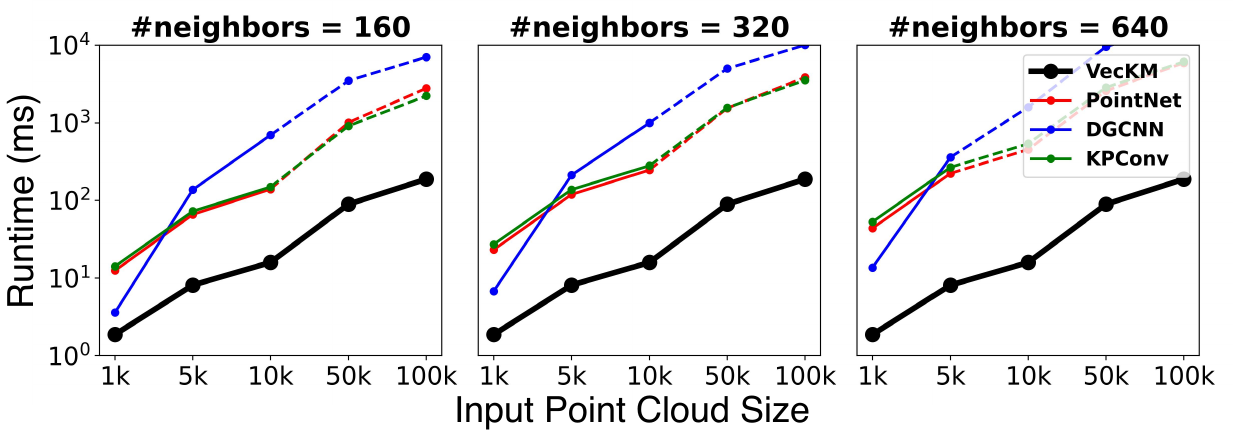}
    \vspace{-19pt}
    \caption{Runtime of local geometry encoders under different input point cloud size and neighbor size. All models are tested on an RTXA-5000 with 24 GB memory. Dash lines mean the memory is not sufficient to process all the points in one batch and has to process the points batches by batches.}
    \label{fig:exp_runtime}
    \vspace{-8pt}
\end{figure}

\subsection{Classification on ModelNet40 Dataset}
\label{sec:classification}
We evaluate our VecKM on 3D object classification using the ModelNet40 dataset \citep{wu20153d}. We compare classification accuracy and inference time with the baselines.

\textbf{Training Details.} We use the same training setting for all the methods. We use the official split with 9,843 objects for training and 2,468 for testing. Each point cloud is uniformly sampled to 1,024 points. During training, random translation in $[-0.2,0.2]$, and random scaling in $[0.67,1.50]$ are applied. We set the batch size to 32 and train the models for 250 epochs. We use the Adam optimizer, setting the initial learning rate as 0.001, with a cosine annealing scheduler. All models are trained and tested on an RTXA-5000.

\textbf{Baselines.} For our experiments, we select three widely-used point cloud architectures: PointNet \citep{qi2017pointnet}, PointNet++ \citep{qi2017pointnet++} and the Point Cloud Transformer (PCT) \citep{guo2021pct}. We integrate VecKM encoding into these architectures as outlined in Sec. \ref{sec:VecKM_deep}, which involves adding or replacing the original local geometry encoding modules with VecKM with $\alpha=30$, $\beta=6$ and $d=256$. Since the size of the point cloud is small, we implement VecKM by Eqn. \eqref{eqn:proof_DLGE}. We also compare VecKM-based architectures with the another light-weight network PointMLP \cite{ma2022rethinking}.

Specifically, for PointNet, since it does not have a local geometry encoding module, we \emph{add} the VecKM module before the PointNet, which means the PointNet receives the geometry encoding as input instead of the raw point coordinates. Since we \emph{add} (denoted by $\rightarrow$) VecKM as an additional module, the runtime is going to be longer. For PointNet++, we \emph{replace} (denoted by $\leftrightharpoons$) the first set abstraction layer with our VecKM encoding and leave the rest unchanged. For PCT, we \emph{replace} the initial input embedding module with the VecKM while retaining the transformer modules. For PointNet++ and PCT, since we \emph{replace} the dense local geometry encoding module with the more efficient VecKM, the runtime is expected to decrease.

As demonstrated in Table \ref{tab:classification}, \textbf{architectures based on VecKM consistently outperform their baseline counterparts in accuracy while also benefiting from significantly reduced runtime.} When VecKM is integrated with PointNet++ and PCT, not only is performance enhanced, but the speed of operation is also faster compared to the baselines. When comparing VecKM $\rightarrow$ PN against PointNet, there is a notable improvement in accuracy by 2.1\% and 2.6\%, with only a minimal increase in runtime. This is significant since the VecKM $\rightarrow$ PN architecture exhibits superior performance compared to both PointNet++ and PCT, and meanwhile operating 7.18x and 9.5x faster, respectively. Compared with PointMLP, VecKM-based architectures are even more efficient, achieving on-par accuracies.



\begin{table}[t]
\centering
\caption{Classification performance on the ModelNet40 dataset. VecKM $\rightarrow$ PN means \emph{adding} VecKM as a preprocessing module to PointNet, so the runtime is expected to be longer than the PointNet baseline. VecKM $\leftrightharpoons$ PN++/PCT means \emph{replacing} the original dense local geometry encoding in the original architectures with VecKM. Since VecKM is more efficient, the runtime is reduced.}
\label{tab:classification}
\resizebox{0.49\textwidth}{!}{%
\begin{tabular}{@{}lcccc@{}}
\toprule
 &
  \begin{tabular}[c]{@{}c@{}}Instance\\ Accuracy\end{tabular} &
  \begin{tabular}[c]{@{}c@{}}Avg. Class\\ Accuracy\end{tabular} &
  \begin{tabular}[c]{@{}c@{}}Inference Time (ms)\\ (1 batch)\end{tabular} &
  \# parameters \\ \midrule
PointMLP &
  93.2\% &
  90.1\% &
  325.85 &
  13.2M \\ \midrule
PointNet &
  90.8\% &
  87.1\% &
  3.04 &
  1.61M \\
VecKM $\rightarrow$ PN &
  92.9\% &
  89.7\% &
  14.32 &
  9.06M \\
\rowcolor[HTML]{EFEFEF} 
Difference &
  {\color[HTML]{FE0000} $\uparrow$ 2.1\%} &
  {\color[HTML]{FE0000} $\uparrow$ 2.6\%} &
  {\color[HTML]{000000} not comparable} &
  {\color[HTML]{32CB00} +7.61M} \\ \midrule
PointNet++ &
  92.7\% &
  89.4\% &
  117.13 &
  1.48M \\
VecKM $\leftrightharpoons$ PN++ &
  93.0\% &
  89.7\% &
  65.78 &
  3.94M \\
\rowcolor[HTML]{EFEFEF} 
Difference &
  {\color[HTML]{FE0000} $\uparrow$ 0.3\%} &
  {\color[HTML]{FE0000} $\uparrow$ 0.3\%} &
  {\color[HTML]{FE0000} 78\% faster} &
  {\color[HTML]{32CB00} +2.46M} \\ \midrule
PCT &
  92.9\% &
  89.8\% &
  149.72 &
  2.88M \\
VecKM $\leftrightharpoons$ PCT &
  93.1\% &
  90.6\% &
  21.44 &
  5.07M \\
\rowcolor[HTML]{EFEFEF} 
Difference &
  {\color[HTML]{FE0000} $\uparrow$ 0.2\%} &
  {\color[HTML]{FE0000} $\uparrow$ 0.8\%} &
  {\color[HTML]{FE0000} 5.98x faster} &
  {\color[HTML]{32CB00} +2.19M} \\ \bottomrule
\end{tabular}%
}
\vspace{-12pt}
\end{table}

\begin{table}[t]
\centering
\caption{Part segmentation performance on the ShapeNet dataset. Similar to the classification, $\rightarrow$ means adding VecKM as a preprocessing module, so the runtime is expected to be longer. $\leftrightharpoons$ means replacing the dense local geometry encoding module with VecKM. Since VecKM is more efficient, the runtime is reduced.}
\label{tab:part_segmentation}
\resizebox{0.49\textwidth}{!}{%
\begin{tabular}{@{}lcccc@{}}
\toprule
 &
  \begin{tabular}[c]{@{}c@{}}Instance\\ mIoU\end{tabular} &
  \begin{tabular}[c]{@{}c@{}}Avg. Class\\ mIoU\end{tabular} &
  \begin{tabular}[c]{@{}c@{}}Inference Time (ms)\\ (1 batch)\end{tabular} &
  \# parameters \\ \midrule
PointMLP                        & 85.1\% & 82.1\% & 240.39 & 16.76M \\ \midrule
PointNet                        & 83.1\% & 77.6\% & 15.1   & 8.34M  \\
VecKM $\rightarrow$ PN          & 84.9\% & 81.8\% & 40.8   & 1.29M  \\
\rowcolor[HTML]{EFEFEF} 
\cellcolor[HTML]{EFEFEF}Difference &
  {\color[HTML]{FE0000} $\uparrow$ 1.8\%} &
  {\color[HTML]{FE0000} $\uparrow$ 4.2\%} &
  {\color[HTML]{000000} not comparable} &
  {\color[HTML]{32CB00} +7.05M} \\ \midrule
PointNet++                      & 85.0\% & 81.9\% & 130.8  & 1.41M  \\
VecKM $\leftrightharpoons$ PN++ & 85.3\% & 82.0\% & 65.9   & 1.50M  \\
\rowcolor[HTML]{EFEFEF} 
\cellcolor[HTML]{EFEFEF}Difference &
  {\color[HTML]{FE0000} $\uparrow$ 0.3\%} &
  {\color[HTML]{FE0000} $\uparrow$ 0.1\%} &
  {\color[HTML]{FE0000} 98\% faster} &
  {\color[HTML]{32CB00} +0.09M} \\ \midrule
PCT                             & 85.7\% & 82.6\% & 145.2  & 1.63M  \\
VecKM $\leftrightharpoons$ PCT  & 85.8\% & 82.6\% & 46.6   & 1.71M  \\
\rowcolor[HTML]{EFEFEF} 
\cellcolor[HTML]{EFEFEF}Difference &
  {\color[HTML]{FE0000} $\uparrow$ 0.1\%} &
  {\color[HTML]{000000} 0.0\%} &
  {\color[HTML]{FE0000} 2.11x faster} &
  {\color[HTML]{32CB00} +0.08M} \\ \bottomrule
\end{tabular}%
}
\vspace{-12pt}
\end{table}

\subsection{Part Segmentation on ShapeNet Dataset}
\label{sec:part_segmentation}
We evaluate our VecKM on 3D object part segmentation. Our experiment utilizes the ShapeNet \citep{chang2015shapenet} dataset. Similar to the classification experiment, we compare the IoU and inference time with PointNet, PointNet++, PCT, and PointMLP. The baselines and their VecKM counter-parts are obtained like the classification experiment in Section \ref{sec:classification}. The parameters of VecKM are selected as $\alpha=30, \beta=9$ and $d=256$.
Since the size of the point cloud is small, we implement VecKM by Eqn. \eqref{eqn:proof_DLGE}. 

\textbf{Training Details.} We use the same training setting for all the methods. We use the official split with 14,006 3D models for training and 2,874 for testing. Each point cloud is uniformly sampled to 2,048 points. During training, random translation in $[-0.2, 0.2]$, and random scaling in $[0.67, 1.50]$ are applied. We set the batch size to 16 and train the model for 250 epochs. We use the Adam optimizer, setting the initial learning rate as 0.001, with a cosine annealing scheduler. All models are trained and tested on an RTXA-5000.

As demonstrated in Table \ref{tab:part_segmentation}, similar to the classfication experiment, architectures based on VecKM consistently outperform their baseline counter-parts in accuracy while also benefiting from significantly reduced runtime.

\subsection{Semantic Segmentation on S3DIS Dataset}
\label{sec:semantic_segmentation}
We evaluate our VecKM on 3D semantic segmentation. We use the S3DIS dataset \cite{armeni2017joint}, which is an indoor scene dataset. It contains 6 areas and 271 rooms. Each point in this dataset is classified into one of 13 categories. Each scene contains around 10,000$\sim$100,000 points. We use the same training setting as \citet{zhao2021point}.

\textbf{Baselines.} We select PointNet++ and Point Transformer \cite{zhao2021point} as the baselines. For \underline{PointNet++}, in its first layer, PointNet++ first downsamples the point cloud by $1/4$ and for each sampled point, $32$ neighboring points are sampled and transformed by a PointNet. The VecKM $\rightarrow$ PN++ counter-part is obtained by \emph{adding} the dense local geometry encoder before the first layer.  Consequently, the PointNet in the first layer will transform the local geometry encoding instead of the raw 3d coordinates. Because of the downsampling operation in PointNet++, its inference time is much shorter. Therefore, PN++ and VecKM $\rightarrow$ PN++ are not comparable in terms of inference time. For \underline{Point Transformer}, its first layer is a dense local geometry encoder with PointNet. We \emph{replace} the dense local geometry encoder with our VecKM encoding to obtain the PT $\leftrightharpoons$ VecKM architecture. In both architectures, since the size of the point cloud is large, we implement VecKM by Eqn. \eqref{eqn:DLGE}. We set $\alpha=30, \beta=9, d=256, p=2048$, and we use a sequence of two complex linear layers to transform the local geometry encoding from $\mathbb{C}^{256}$ to $\mathbb{C}^{64}$.

As shown in Table \ref{tab:sem_segmentation}, \textbf{VecKM improves PointNet++ baseline significantly}. This is because the downsampling of the point cloud induces information loss in the PointNet++ baseline, while the dense VecKM encoding effectively bridges the gap. On the other hand, \textbf{VecKM improves the inference speed of point transformer}, which is expected given the efficiency of VecKM especially on large point cloud input. Regarding why VecKM $\leftrightharpoons$ PT does not yield better accuracy, it is possibly because the heavy-weight point transformer architecture already adequately reasons on the geometry. Unlike PointNet++, the local geometry encoding is not a bottleneck for point transformer. Since the subsequent processing costs the majority of the running time, the acceleration is not as significant as the previous experiments.

\begin{table}[]
\centering
\caption{Semantic segmentation performance on the S3DIS dataset. Similar to the classification experiment, $\rightarrow$ means adding VecKM as a preprocessing module. Since PointNet++ downsamples the point cloud at the first layer while VecKM $\rightarrow$ PN++ does not, their inference time is not comparable. $\leftrightharpoons$ means replacing the original dense local geometry encoding module with VecKM. Since VecKM is more efficient, the runtime is reduced.}
\label{tab:sem_segmentation}
\resizebox{0.49\textwidth}{!}{%
\begin{tabular}{@{}lccccc@{}}
\toprule
 &
  \begin{tabular}[c]{@{}c@{}}Instance\\ mIoU\end{tabular} &
  \begin{tabular}[c]{@{}c@{}}Avg. Class\\ mIoU\end{tabular} &
  \begin{tabular}[c]{@{}c@{}}Overall\\ Accuracy\end{tabular} &
  \begin{tabular}[c]{@{}c@{}}Inference Time (ms)\\ (per scene)\end{tabular} &
  \multicolumn{1}{l}{\#parameters} \\ \midrule
PointNet++ &
  64.05 &
  71.52 &
  87.92 &
  96 &
  0.968M \\
VecKM $\rightarrow$ PN++ &
  67.48 &
  73.53 &
  89.33 &
  391 &
  1.11M \\
\rowcolor[HTML]{EFEFEF} 
Difference &
  {\color[HTML]{FE0000} $\uparrow$ 3.43} &
  {\color[HTML]{FE0000} $\uparrow$ 2.01} &
  {\color[HTML]{FE0000} $\uparrow$ 1.41} &
  {\color[HTML]{000000} not comparable} &
  {\color[HTML]{32CB00} +0.142M} \\ \midrule
Point Transformer &
  69.29 &
  75.66 &
  90.36 &
  559 &
  7.77M \\
VecKM $\leftrightharpoons$ PT &
  69.53 &
  75.84 &
  90.39 &
  447 &
  7.93M \\
\rowcolor[HTML]{EFEFEF} 
Difference &
  {\color[HTML]{FE0000} $\uparrow$ 0.24} &
  {\color[HTML]{FE0000} $\uparrow$ 0.18} &
  {\color[HTML]{FE0000} $\uparrow$ 0.03} &
  {\color[HTML]{FE0000} 20\% faster} &
  {\color[HTML]{32CB00} +0.16M} \\ \bottomrule
\end{tabular}%
}
\vspace{-14pt}
\end{table}

\section{Ablation Studies}
\label{sec:ablation}
In Section \ref{sec:dense_local_geometry}, we qualitatively analyze the effect of the parameters $\alpha$ and $\beta$ in Theorem \ref{thm:DLGE}. In this section, we quantitatively analyze the effect of the parameters in the context of the ModelNet40 classification experiment, with the VecKM $\rightarrow$ PN architecture. For $\alpha$ selection, when the input point cloud is normalized within a unit ball, setting $\alpha$ in the range of $(20,35)$ yields good performance. As shown in Table \ref{tab:ablation}, appropriate selections of $\alpha$ and $\beta$ are important to yield a good performance on the downstream tasks.
\begin{table}[t]
\centering
\vspace{-8pt}
\caption{Ablation study on the selection of the parameters $\alpha$ and $\beta$ in Theorem \ref{thm:DLGE}, in the context of ModelNet40 classification experiment. Numbers greater than $92.5\%$ are bolded.}
\label{tab:ablation}
\resizebox{0.35\textwidth}{!}{%
\begin{tabular}{@{}l|rrrr@{}}
\toprule
 & \multicolumn{1}{l}{$\alpha=20$} & \multicolumn{1}{l}{$\alpha=25$} & \multicolumn{1}{l}{$\alpha=30$} & \multicolumn{1}{l}{$\alpha=35$} \\ \midrule
$\beta=4$  & 91.73\%          & 91.94\%          & 91.73\%          & 91.77\%          \\
$\beta=6$  & \textbf{92.59\%} & 92.14\%          & \textbf{92.87\%} & \textbf{92.50\%} \\
$\beta=9$  & 92.18\%          & \textbf{92.71\%} & \textbf{92.95\%} & \textbf{92.50\%} \\
$\beta=12$ & 92.10\%          & \textbf{92.54\%} & \textbf{92.59\%} & 92.38\%          \\ \bottomrule
\end{tabular}%
}
\vspace{-8pt}
\end{table}

\vspace{-1pt}
\begin{wrapfigure}{r}{0.22\textwidth}
  \centering
  \vspace{-14pt}
  \includegraphics[width=0.2\textwidth]{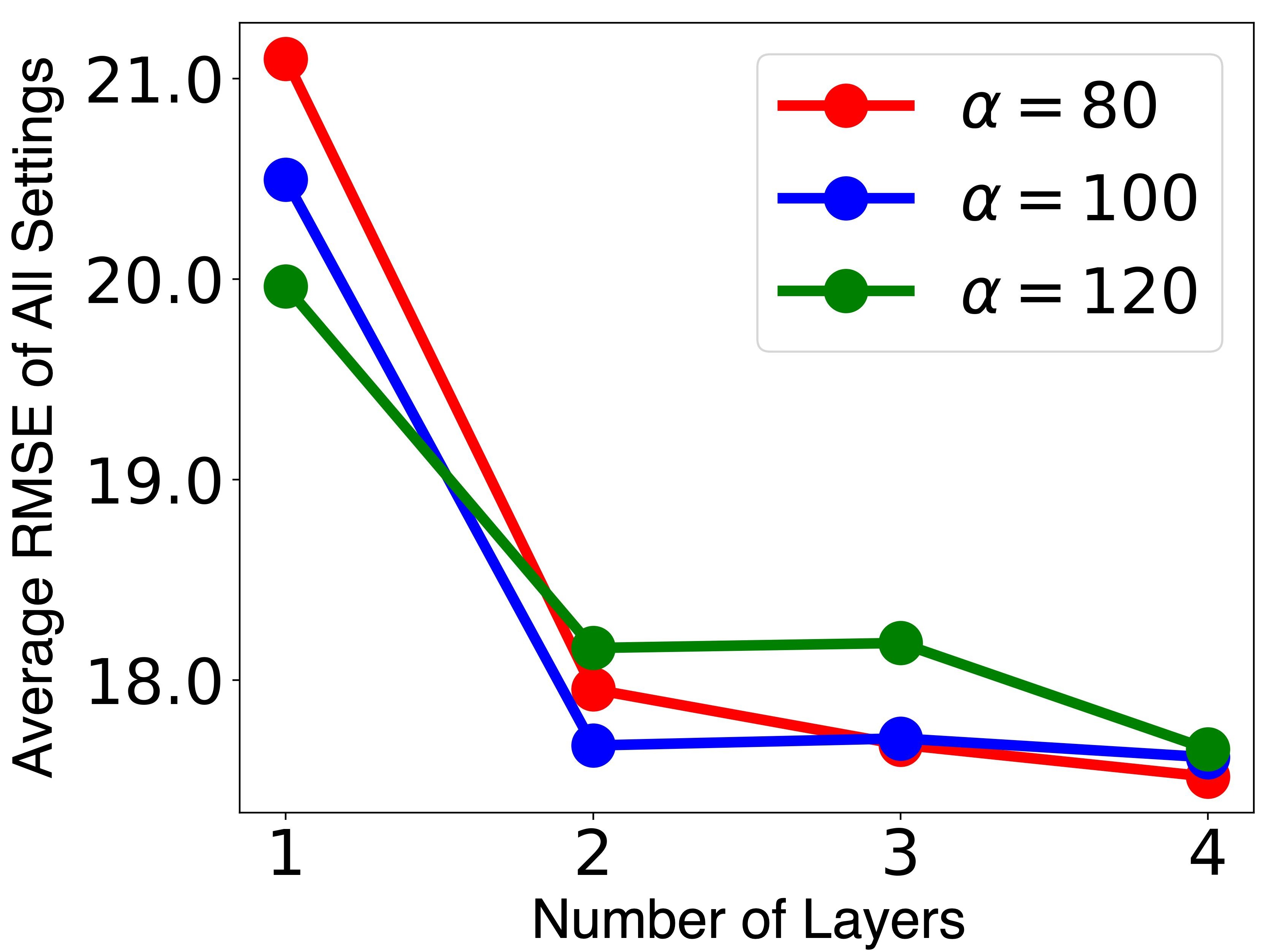}
  \vspace{-9pt}
  \caption{Average RMSE of normal estimation trained with different numbers of layers.}
  \vspace{-7pt}
  \label{fig:ablation_layers}
\end{wrapfigure}
We study how many fully-connected layers are needed for transforming the VecKM encoding, in the context of normal estimation tasks. As shown in Figure \ref{fig:ablation_layers}, two layers are sufficient for stably satisfactory performance, highlighting the inherent descriptiveness of VecKM encoding.

\vspace{-1pt}
Notice that the selection of the $\alpha$ parameter varies across different tasks. In classification, where refined local geometry is less critical, a smaller $\alpha$ is used to abstract away finer details. For normal estimation tasks, where accurate local shape representation is crucial, a larger $\alpha$ is employed to retain essential details. These findings demonstrate VecKM's adaptability in meeting the diverse requirements of various tasks, adjusting to the specific level of detail needed.

\vspace{-6pt}
\section{Conclusion}
VecKM, our novel local point cloud encoder, stands out for its efficiency and noise robustness. VecKM vectorizes a kernel mixture associated with the local point cloud, providing a solid theoretical foundation for its descriptiveness and robustness. Thanks to its special formulation, VecKM is the only existing local geometry encoder that costs linear time and space. Through extensive experiments, VecKM has demonstrated significant improvements in speed and accuracy across a variety of point cloud processing tasks. VecKM has many potential applications due to its notable features. Its efficiency facilitates faster inference, ideal for time-critical tasks like event data processing.

\newpage

\section*{Impact Statements}
This paper presents work whose goal is to advance the field of Machine Learning. There are many potential societal consequences of our work, none which we feel must be specifically highlighted here.

\section*{Acknowledgement}
The support of NSF under awards OISE 2020624 and BCS 2318255, and ARL under the Army Cooperative Agreement W911NF2120076 is greatly acknowledged. We extend our gratitude to Pixabay for providing free access to the icon used in drafting Figure \ref{fig:proof_outline} and \ref{fig:parameters}.


\bibliography{example_paper}
\bibliographystyle{icml2024}

\newpage
\appendix
\onecolumn
\newpage
\appendix
\onecolumn
\section{Proof of Lemma 1}
\label{sec:app_lemma1}
\begin{lemma*}[VecKM embodies a Gaussian kernel]
Let $\x, \y\in\mathbb{R}^3$, $\A\in\mathbb{R}^{3\times d}$. All elements in $\A$ are drawn from normal distribution $\mathcal{N}(0, \alpha^2)$. Then as $d\to\infty$,
\begin{align*}
\begin{split}
    \frac{1}{d}\langle e^{i\x\A}, e^{i\y\A}\rangle\rightarrow \GG_\alpha(\x, \y):=\exp\big(-\frac{\alpha^2||\x-\y||^2}{2}\big)
\end{split}
\end{align*}
\end{lemma*}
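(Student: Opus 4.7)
The plan is to express the inner product as an empirical average over the $d$ columns of $\A$, apply the law of large numbers, and evaluate the resulting expectation as the characteristic function of a Gaussian.

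Let $\A_k \in \mathbb{R}^3$ denote the $k$-th column of $\A$, so the entries of $\A_k$ are i.i.d.\ $\mathcal{N}(0, \alpha^2)$, and the columns $\A_1, \ldots, \A_d$ are themselves i.i.d. Using the Hermitian inner product $\langle u, v\rangle = \sum_k \overline{u_k} v_k$, and writing $\x\A_k$ and $\y\A_k$ for the scalar products, I would first observe
\begin{equation*}
\frac{1}{d}\langle e^{i\x\A}, e^{i\y\A}\rangle
= \frac{1}{d}\sum_{k=1}^d e^{-i\x\A_k}\,e^{i\y\A_k}
= \frac{1}{d}\sum_{k=1}^d e^{i(\y-\x)\A_k}.
\end{equation*}
Since the summands are i.i.d.\ bounded complex random variables, the (strong) law of large numbers gives, as $d\to\infty$,
\begin{equation*}
\frac{1}{d}\sum_{k=1}^d e^{i(\y-\x)\A_k} \;\longrightarrow\; \mathbb{E}\!\left[e^{i(\y-\x)\A_1}\right].
\end{equation*}

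Next I would evaluate the expectation. Setting $\z := \y - \x \in \mathbb{R}^3$, the scalar $\z^\top \A_1 = \sum_{j=1}^3 z_j (\A_1)_j$ is a linear combination of independent $\mathcal{N}(0,\alpha^2)$ variables, so it is itself Gaussian with mean $0$ and variance $\alpha^2\|\z\|^2 = \alpha^2\|\y-\x\|^2$. The characteristic function of a centered Gaussian $W\sim \mathcal{N}(0,\sigma^2)$ at $t=1$ is $\mathbb{E}[e^{iW}] = e^{-\sigma^2/2}$, hence
\begin{equation*}
\mathbb{E}\!\left[e^{i(\y-\x)\A_1}\right] = \exp\!\left(-\frac{\alpha^2\|\y-\x\|^2}{2}\right) = \GG_\alpha(\x,\y),
\end{equation*}
completing the argument.

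No step is especially delicate; the only points that deserve care are (a) fixing the convention for the complex inner product so the conjugation combines the two exponents into $e^{i(\y-\x)\A_k}$ rather than $e^{i(\y+\x)\A_k}$, and (b) justifying the application of the law of large numbers, which is routine because $|e^{i(\y-\x)\A_k}| = 1$ so all moments exist. The core conceptual ingredient is the random-Fourier-features identity (an instance of Bochner's theorem) that relates the Gaussian characteristic function to the Gaussian kernel; once this is invoked, the rest is bookkeeping.
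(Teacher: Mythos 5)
Your proposal is correct and follows essentially the same route as the paper: decompose the inner product into an i.i.d.\ average over the columns of $\A$, apply the law of large numbers, and identify the limiting expectation with the Gaussian kernel. The only cosmetic difference is that you invoke the standard characteristic function of a centered Gaussian directly, whereas the paper derives the same identity by Taylor-expanding $\cos$ and $\sin$ and evaluating the Gaussian moments term by term.
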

\begin{proof}
    Let $\ba\in\mathbb{R}^3$ where $\ba\sim\mathcal{N}(\mathbf{0},\alpha^2 \mathbf{I}_{3\times 3})$ be one column of the matrix $\A$, we claim that $\mathbb{E}\big[\Re\big(e^{i\ba\cdot (\x-\y)}\big)\big]=\GG_\alpha(\x,\y)$:
    \begin{align*}
        \mathbb{E}\Big[\Re\big(e^{i \mathbf{\ba}\cdot (\x-\y)}\big)\Big]&=\mathbb{E}\Big[\cos\big(\ba\cdot(\x-\y)\big)\Big]\\
        &=\mathbb{E}\Big[\sum_{k=0}^\infty (-1)^k \frac{\big(\mathbf{\ba}\cdot(\x-\y)\big)^{2k}}{(2k)!}\Big] \\
        &=\sum_{k=0}^\infty \frac{(-1)^k}{(2k)!} \mathbb{E}\Big[\big(\sum_{j=1}^3 (\x_j-\y_j)\ba_j \big)^{2k}\Big] \\
        &=\sum_{k=0}^\infty \frac{(-1)^k}{(2k)!} \mathbb{E}\Big[\big(\alpha||\x-\y|| Z\big)^{2k}\Big] \quad \text{where $Z\in\mathcal{N}(0,1)$} \\
        &=\sum_{k=0}^\infty \frac{(-1)^k}{(2k)!} \cdot \alpha^{2k}||\x-\y||^{2k} \cdot \frac{(2k)!}{k! 2^k} \\
        &=\sum_{k=0}^\infty \frac{(-1)^k}{k! 2^k} \cdot \alpha^{2k}||\x-\y||^{2k} \\
        &=\exp\big(-\frac{\alpha^2||\x-\y||^2}{2}\big)
    \end{align*}

    On the other hand, $\mathbb{E}\big[\Im\big(e^{i\ba\cdot (\x-\y)}\big)\big]=0$ because normal distribution is a symmetric distribution around 0:
    \begin{align*}
        \mathbb{E}\Big[\Im\big(e^{i \mathbf{\ba}\cdot (\x-\y)}\big)\Big]&=\mathbb{E}\Big[\sin\big(\ba\cdot(\x-\y)\big)\Big]\\
        &=\mathbb{E}\Big[\sum_{k=0}^\infty (-1)^k \frac{\big(\mathbf{\ba}\cdot(\x-\y)\big)^{2k+1}}{(2k+1)!}\Big] \\
        &=\sum_{k=0}^\infty \frac{(-1)^k}{(2k+1)!} \mathbb{E}\Big[\big(\sum_{j=1}^3 (\x_j-\y_j)\ba_j \big)^{2k+1}\Big] \\
        &=\sum_{k=0}^\infty \frac{(-1)^k}{(2k+1)!} \mathbb{E}\Big[\big(\alpha||\x-\y|| Z\big)^{2k+1}\Big] \quad \text{where $Z\in\mathcal{N}(0,1)$} \\
        &=0 \quad \text{because $\mathbb{E}(Z^{2k+1})=0$}
    \end{align*}

    Therefore, when we randomize $d$ rows of such $\ba$ vector, the inner product $\frac{1}{d}\langle e^{i\x\A}, e^{i\y\A}\rangle = \frac{1}{d}\sum_{k=1}^d e^{i\ba_k\cdot(\x-\y)}$ will converge to $\GG_\alpha (\x, \y)$ thanks to the Law of Large Number and the Central Limit Theorem.
\end{proof}

\newpage
\section{PyTorch Implementation of VecKM}
\label{sec:app_pytorch}
\begin{lstlisting}
import torch
import torch.nn as nn
import numpy as np
from scipy.stats import norm

def strict_standard_normal(d):
    # this function generate very similar outcomes as torch.randn(d)
    # but the numbers are strictly standard normal, no randomness.
    y = np.linspace(0, 1, d+2)
    x = norm.ppf(y)[1:-1]
    np.random.shuffle(x)
    x = torch.tensor(x).float()
    return x

class VecKM(nn.Module):
    def __init__(self, d=256, alpha=6, beta=1.8, p=4096):
        super().__init__()
        self.alpha, self.beta, self.d, self.p = alpha, beta, d, p
        self.sqrt_d = d ** 0.5

        self.A = torch.stack(
            [strict_standard_normal(d) for _ in range(3)], 
            dim=0
        ) * alpha
        self.A = nn.Parameter(self.A, False)            # Real(3, d)

        self.B = torch.stack(
            [strict_standard_normal(p) for _ in range(3)], 
            dim=0
        ) * beta
        self.B = nn.Parameter(self.B, False)            # Real(3, d)

    def forward(self, pts):
        """ Compute the dense local geometry encodings of the given point cloud.
        Args:
            pts: (bs, n, 3) or (n, 3) tensor, the input point cloud.

        Returns:
            G: (bs, n, d) or (n, d) tensor. the dense local geometry encodings. 
        """
        pA = pts @ self.A                               # Real(..., n, d)
        pB = pts @ self.B                               # Real(..., n, p)
        eA = torch.concatenate(
            (torch.cos(pA), torch.sin(pA)), dim=-1)     # Real(..., n, 2d)
        eB = torch.concatenate(
            (torch.cos(pB), torch.sin(pB)), dim=-1)     # Real(..., n, 2p)
        G = torch.matmul(
            eB,                                         # Real(..., n, 2p)
            eB.transpose(-1,-2) @ eA                    # Real(..., 2p, 2d)
        )                                               # Real(..., n, 2d)
        G = torch.complex(
            G[..., :self.d], G[..., self.d:]
        ) / torch.complex(
            eA[..., :self.d], eA[..., self.d:]
        )                                               # Complex(..., n, d)
        G = G / torch.norm(G, dim=-1, keepdim=True) * self.sqrt_d
        return G

vkm = VecKM()
pts = torch.rand((10,1000,3))
print(vkm(pts).shape) # it will be Complex(10,1000,256)
pts = torch.rand((1000,3))
print(vkm(pts).shape) # it will be Complex(1000, 256)
from complexPyTorch.complexLayers import ComplexLinear, ComplexReLU
# You may want to use apply two-layer feature transform to the encoding.
feat_trans = nn.Sequential(
    ComplexLinear(256, 128),
    ComplexReLU(),
    ComplexLinear(128, 128)
)
G = feat_trans(vkm(pts))
G = G.real**2 + G.imag**2 # it will be Real(10, 1000, 128) or Real(1000, 128).
\end{lstlisting}

\newpage
\section{Effect of Parameters $\alpha, \beta, d, p$}
\label{sec:app_parameters}
\begin{figure}[H]
    \centering
    \vspace{-3pt}
    \includegraphics[width=0.7\textwidth]{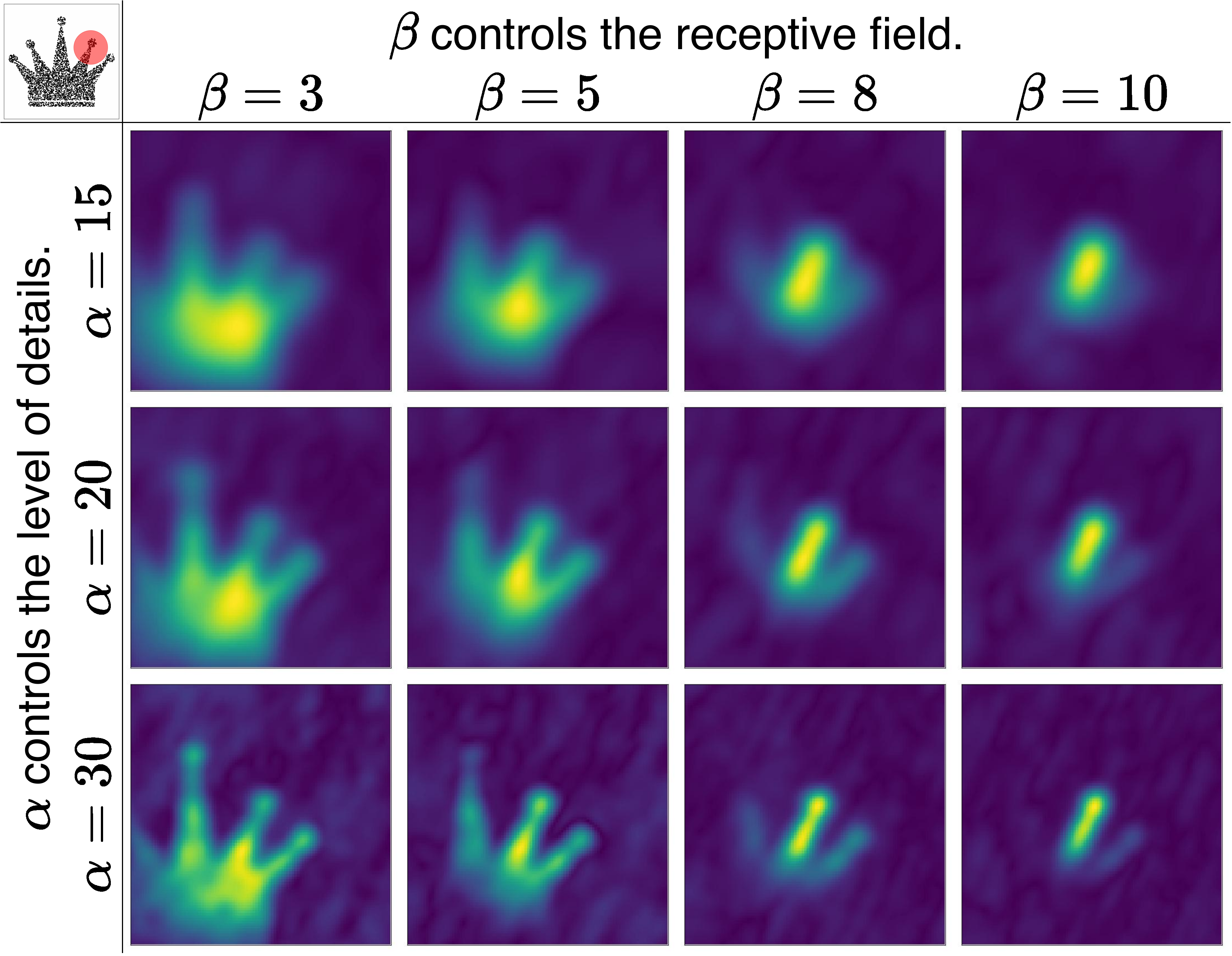}
    \vspace{-4pt}
    \caption{Effect of the parameters $\alpha$ and $\beta$ in Theorem 
    \ref{thm:DLGE}.}
    \label{fig:parameters}
    \vspace{-8pt}
\end{figure}

\begin{figure}[H]
    \centering
    \vspace{-3pt}
    \includegraphics[width=0.7\textwidth]{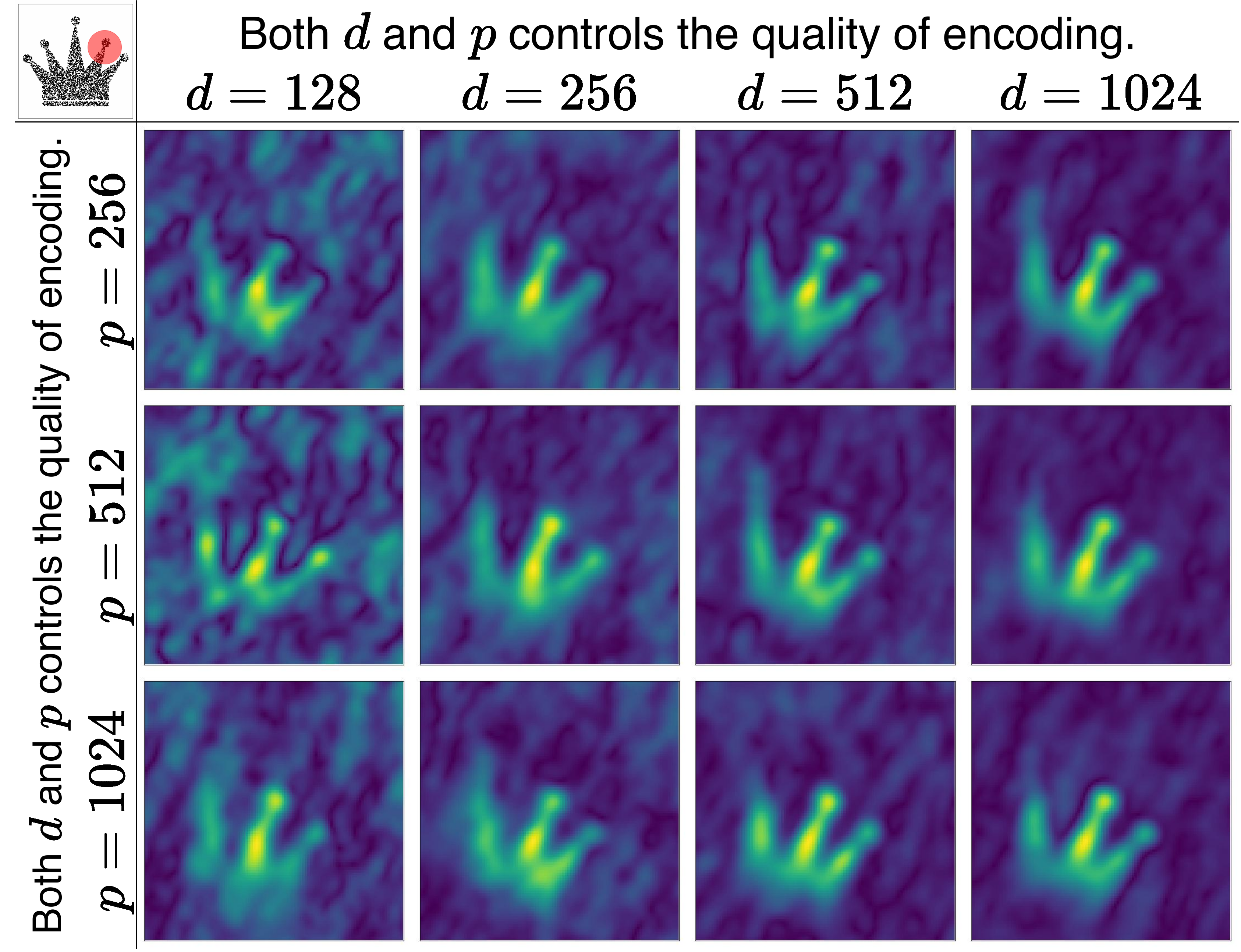}
    \vspace{-4pt}
    \caption{Effect of the parameters $d$ and $p$ in Theorem 
    \ref{thm:DLGE}.}
    \label{fig:dp}
    \vspace{-8pt}
\end{figure}

\newpage
\section{Guidance for Selecting the Parameter $\beta$}
\label{sec:radius_beta}
The statistics is obtained by $r=\min_{r}\{r: e^{-r^2/2} < 0.1\}$. The beta and the radius have a relation of $\beta_1r_1=\beta_2r_2$.
\begin{table}[h]
\centering
\caption{Relation between the parameter $\beta$ and the neighborhood radius.}
\label{tab:beta}
\resizebox{0.8\textwidth}{!}{%
\begin{tabular}{@{}lllllllllll@{}}
\toprule
beta   & 1     & 2     & 3     & 4     & 5     & 6     & 7     & 8     & 9     & 10    \\
radius & 1.800 & 0.900 & 0.600 & 0.450 & 0.360 & 0.300 & 0.257 & 0.225 & 0.200 & 0.180 \\ \midrule
beta   & 11    & 12    & 13    & 14    & 15    & 16    & 17    & 18    & 19    & 20    \\
radius & 0.163 & 0.150 & 0.138 & 0.129 & 0.120 & 0.113 & 0.106 & 0.100 & 0.095 & 0.090 \\ \midrule
beta   & 21    & 22    & 23    & 24    & 25    & 26    & 27    & 28    & 29    & 30    \\
radius & 0.086 & 0.082 & 0.078 & 0.075 & 0.072 & 0.069 & 0.067 & 0.065 & 0.062 & 0.060 \\ \bottomrule
\end{tabular}%
}
\end{table}

\end{document}